\newcommand{\NewT}{\Omega}
\newcommand\munderbar[1]{%
  \underaccent{\bar}{#1}}
\renewcommand{\munderbar}{\underline}
\renewcommand{\bar}{\overline}
\newcommand{\colvec}[2][.8]{%
  \scalebox{#1}{%
    \renewcommand{\arraystretch}{.8}%
    $\begin{smallmatrix}#2\end{smallmatrix}$%
  }
}
\newtheorem{thm}{Theorem}
\newtheorem{prop}{Proposition}
\newtheorem{cor}{Corollary}
\theoremstyle{definition}
\newtheorem{defn}{Definition}
\newtheorem{ass}{Assumption}
\newcommand\copyrighttext{
  \footnotesize This work has been submitted to the IEEE for possible publication. Copyright may be transferred without notice, after which this version may no longer be accessible.
}
\newcommand\copyrightnotice{
\begin{tikzpicture}[remember picture,overlay]
\node[anchor=south,yshift=10pt] at (current page.south) {\fbox{\parbox{\dimexpr\textwidth-\fboxsep-\fboxrule\relax}{\copyrighttext}}};
\end{tikzpicture}
}
\long\def\@makecaption#1#2{\ifx\@captype\@IEEEtablestring%
\footnotesize\begin{center}{\normalfont\footnotesize #1}\\
{\normalfont\footnotesize\scshape #2}\end{center}%
\@IEEEtablecaptionsepspace
\else
\@IEEEfigurecaptionsepspace
\setbox\@tempboxa\hbox{\normalfont\footnotesize {#1.}~~ #2}%
\ifdim \wd\@tempboxa >\hsize%
\setbox\@tempboxa\hbox{\normalfont\footnotesize {#1.}~~ }%
\parbox[t]{\hsize}{\normalfont\footnotesize \noindent\unhbox\@tempboxa#2}%
\else
\hbox to\hsize{\normalfont\footnotesize\hfil\box\@tempboxa\hfil}\fi\fi}
\begin{document}
\title{Robust Stability of Neural Network-controlled Nonlinear Systems with Parametric Variability}

\author{Soumyabrata~Talukder,~\IEEEmembership{Student Member,~IEEE}
      Ratnesh~Kumar,~\IEEEmembership{Fellow,~IEEE}
\thanks{The work was supported in part by the National Science Foundation under the grants, CSSI-2004766 and PFI-2141084.}
\thanks{S. Talukder and R. Kumar are with the Department of Electrical and Computer Engineering, Iowa State University, Ames, IA 50010, USA (e-mail: talukder@iastate.edu, rkumar@iastate.edu).}
}  

\maketitle
\copyrightnotice
\begin{abstract}
Stability certification and identifying a safe and stabilizing initial set are two important concerns in ensuring operational safety, stability, and robustness of dynamical systems. With the advent of machine-learning tools, these issues need to be addressed for the systems with machine-learned components in the feedback loop. To develop a general theory for stability and stabilizability of a neural network (NN)-controlled nonlinear system subject to bounded parametric variation, a Lyapunov-based stability certificate is proposed and is further used to devise a maximal Lipschitz bound for the NN controller, and also a corresponding maximal region-of-attraction (RoA) inside a given safe operating domain. To compute such a robustly stabilizing NN controller that also maximizes the system's long-run utility, a stability-guaranteed training (SGT) algorithm is proposed. 
The effectiveness of the proposed framework is validated through an illustrative example.
\end{abstract}

\begin{IEEEkeywords}
Dynamic stability, robust stability, reinforcement learning, imitation learning, neural network, region-of-attraction, Lyapunov function, Lipschitz bound.
\end{IEEEkeywords}

\IEEEpeerreviewmaketitle
\section{Introduction}\label{intro}
\IEEEPARstart{A}{pplication} of NNs to control dynamical systems has gained attention following the recent architectural innovations in NN and the advancements in training algorithms. The NN controllers are trained either in a supervised way, often referred to as \emph{imitation learning} \cite{venkatraman2015improving,schaal2003computational}, or in a semi-supervised way in the form of \emph{reinforcement learning} (RL) \cite{sutton2018reinforcement}. ``Model-free'' RL methods allow data-driven learning of an optimal policy by interacting with the physical system and receiving a reward for each one-step action, without requiring explicit knowledge of the model, e.g., Q-learning \cite{mnih2015human}, and multiple versions of policy-gradient methods \cite{schulman2015trust,lillicrap2015continuous,mnih2016asynchronous,choromanski2018structured}. In contrast, ``model-based'' NN methods are feasible when a model of the physical system, to be used to train an NN controller, is either known or can be identified by interacting with the system \cite{clavera2018model,kurutach2018modelensemble}. The application of RL controllers in real-world critical infrastructures has commenced \cite{diao2020training}. 

Using NNs as controllers offers design flexibility owing to its ability to approximate a large class of Lipschitz functions \cite{hornik1989multilayer}. Yet their demonstrations are mostly restricted in simulated environments \cite{sallab2017deep,8787888,8600371}. One key reason is the lack of closed-loop stability assurance of systems under NN controllers trained using the above algorithms. Their stability analysis is challenging due to the inherent complexity of NN-based control policies \cite{heuillet2021explainability}. Also, while there exist algorithms involving a convex-relaxed search for finding a local optimal RoA of nonlinear systems \cite{ahmadi2019dsos, 8973914} and for quantifying the corresponding stability margin \cite{9198917}, such methods become computationally intractable when the control policy is based on an NN. These limitations led us to explore alternate ways to formally guarantee the stability of NN-controlled systems and compute their RoAs.

\subsection{Related Works}
In \cite{vamvoudakis2010online,8070470}, stability-assured RL algorithms are proposed, where the RL controllers are restricted to be linear and are learned through a gradient-based weight update. The input to such a controller is a set of manually crafted nonlinear basis of the system states; the selection of a set of effective basis for a given system is still an unsolved problem \cite{padhi2006single}. In \cite{7239628, 8319447}, the authors design a similar control scheme for nonlinear multi-agent systems. Also, for the aforementioned cases and others \cite{vamvoudakis2010online,8070470,padhi2006single,7239628,8319447,modares2014integral,7083712,mu2017novel}, the notion of stability is one of uniform ultimate boundedness of the state and/or output signals, whereas a method to ensure the safety of the entire state trajectory (so it remains contained within a given safe domain) has not been reported. Further, the above methods do not generalize for multilayered NN controllers with nonlinear activations due to the additional challenge of underlying nonconvexity in controller training.  

A few recent works exist in the literature \cite{8618996,9388885,9424176} which aim to address the problem of guaranteeing the stability of \emph{multilayered} NN-controlled nonlinear systems. However, the majority of these works study a linearized system, with the effect of nonlinearity and/or parametric uncertainty modeled as \emph{integral quadratic constraints} \cite{587335}. Among these, the method suggested in \cite{8618996} guarantees finite $\mathcal{L}_2$ gain with respect to an external disturbance and also computes a corresponding ``Lipschitz-like" upper bound for the NN controller. However, the designed controller fails to guarantee stability even in absence of any disturbance. 
In \cite{9388885}, the nonlinearity of an already trained NN controller is locally sector-bounded to attain asymptotic stability of a discrete-time system, and also to estimate an RoA in the form of a sub-level set of a Lyapunov function. While the method can {\em verify} the stability under a given controller, it cannot be used to \emph{synthesize} a stabilizing NN controller. In a later work \cite{9424176}, the authors propose an imitation learning-oriented SGT algorithm for NN controller synthesis, providing a convex stability certificate for a discrete-time system. However, its application is restricted to systems free from actuator nonlinearity and/or uncertainty since their presence introduces nonconvexity. Moreover, the suggested NN training algorithm solves a semidefinite program (SDP) at each NN parameter update step, making the training computationally expensive. 

Among other methods, an iterative counterexample-guided search for a Lyapunov function is introduced in \cite{chen2020learning, dailyapunov} to provide stability under ReLu-based NN controllers. The algorithm in \cite{chen2020learning} is guaranteed to converge in finite iterations, but the application domain is limited to piecewise linear discrete-time systems and cannot handle parametric variation. \cite{aydinoglu2020stability} shows that the  ReLu activation function can be represented as the solution of a linear complementarity problem, thereby casting the stability certification of a linear-complementarity system with a ReLu-based NN controller as a linear matrix inequality (LMI). \cite{9146733,9478933} introduced an ``actor-critic'' RL algorithm, where the critic NN is structurally constrained to be positive definite as desired of a Lyapunov function. In \cite{9638007}, an augmented random search-based ``soft safe'' RL algorithm is proposed that employs a corresponding penalty term to the policy NN's objective. None of these methods \cite{9638007,9146733,9478933} can yield a formal stability guarantee.

{\color{black}\subsection{Contributions}\label{sec:contribute}
For the class of state-feedback NN-controlled, locally continuously differentiable continuous-time (CT) nonlinear systems, subject to parametric variations within a known bound, we make the following key contributions:
\begin{itemize}
\item A Lyapunov-based sufficient condition is introduced to certify a system's local asymptotic stability, robust to arbitrary parametric variations, under a controller satisfying a certain Lipschitz bound.
\item An algorithm is introduced using the above result to compute a maximal Lipschitz bound such that any controller satisfying the bound locally is robustly stabilizing, and also a corresponding ``robust safe initialization set" (RSIS) that is a maximal RoA contained within a user-given safe operating domain (so that any initialization of the controlled-system within the RSIS guarantees that the state trajectory never leaves the safe domain and eventually converges at the system's equilibrium). 
\item An actor-critic RL algorithm is proposed to synthesize a multilayered NN controller satisfying the above Lipschitz bound and that also maximizes the system's expected utility with respect to random initializations and parametric variations.
\end{itemize}

Our stability condition is not limited to any special class of NN activation functions, unlike the studies in \cite{aydinoglu2020stability,dailyapunov,chen2020learning} that limit the activation to be ReLu. Further, unlike \cite{9146733,9478933,9638007}, our analysis is able to offer a formal closed-loop stability guarantee without requiring any a priori knowledge of a Lyapunov function, which is the restriction in \cite{osinenko2020reinforcement}. Further, in contrast to \cite{vamvoudakis2010online,8070470,padhi2006single,7239628, 8319447,modares2014integral,7083712,mu2017novel,8618996}, our method guarantees that the system's trajectory never leaves a given safe domain. Also, contrary to \cite{9388885} that only provides a stability verification result, our work also introduces a method for controller synthesis. Moreover, in contrast to \cite{9424176}, our stability condition allows nonlinearity and parametric variation in the actuator, and our proposed SGT of NN controllers does not suffer from solving a computationally expensive SDP at each update of NN parameters.}

\subsection{Organization and Notations}
In what follows, Section \ref{problem} briefs the problem statement and also provides an overview of the solution approach. Section \ref{sector_plant_control} presents the mathematical preliminaries, followed by our main stability theorem, which is then used to develop an algorithm to identify a class of robustly stabilizing NN-based controllers that attain a maximal common RSIS. Section \ref{comp} provides our RL algorithm to search for the stabilizing controller locally within the identified class, which maximizes a long-run expected utility. Section \ref{exmpl_combined} validates the proposed method through an illustrative example, and Section \ref{conclusion} concludes the paper.

\textit{Notations:} 
$\mathbb{R}$ (resp., $\mathbb{R}_{\geq0}$, $\mathbb{R}_{>0}$) denotes the real (resp., non-negative real, positive real) scalar field, $\mathbb{R}^n$ denotes the $n$-dimensional real vector field, and $\mathbb{R}^{m\times n}$ denotes the space of all real matrices with $m$ rows and $n$ columns. Operators $\leq,<,\geq,>$ on matrices or vectors indicate elementwise operation. For $x\in\mathbb{R}^n$, $x^i$ denotes its $i^{th}$ element, and $\left\|x\right\|_p$ denotes its $p$-norm for any real $p\geq1$. If $x$ is an $n$-length sequence of reals or $x\in\mathbb{R}^n$, $diag(x)$ denotes the $n\times n$ diagonal matrix, where the $i^{th}$ diagonal element is the $i^{th}$ element of $x$. For $M\in\mathbb{R}^{m\times n}$, its $(i,j)^{th}$ element is denoted by $M^{i,j}$
% , $M^{i}$ denotes its $i^{th}$ row,
and $M^T\in\mathbb{R}^{n\times m}$ denotes its transpose. 
For $M\in\mathbb{R}^{m\times n}$, $\left|M\right|\in\mathbb{R}^{m\times n}$ denotes the matrix comprising the elementwise absolute values, and 
if $M$ is square and symmetric (i.e., $m=n$ and $M=M^T$), $M\succcurlyeq\mathbf{0}$ (resp., $M\preccurlyeq\mathbf{0}$) denotes its positive (resp., negative) semidefiniteness. The Kronecker product of two matrices $M,N$ is denoted $M\odot N$. For a locally differentiable operator $f:\mathbb{R}^n\rightarrow\mathbb{R}^m$, $J_{f,x}\in\mathbb{R}^{m\times n}$ denotes its Jacobian matrix w.r.t. its operand $x\in\mathbb{R}^n$. $\mathbb{E}$ denotes the standard expectation operator. For a set $S$, $|S|$ denotes its cardinality.
Objects having symmetry are often abbreviated by introducing $*$, e.g., we abbreviate $x^TPx$ and
$\begin{bmatrix}
P_{11} & P_{21}^T\\
P_{21} & P_{22}
\end{bmatrix}$, 
respectively, as $x^TP[*]$ and
$\begin{bmatrix}
P_{11} & *\\
P_{21} & P_{22}
\end{bmatrix}$.

\section{Problem Statement and Solution Approach}\label{problem}

%{\color{black}\subsection{Problem Statement:}
We consider a controlled system of the following form:
\begin{equation}\label{clsys}
\begin{split}
    \dot{x}(t) &= f(x(t),u(t), \omega(t)),\\
    u(t) &= \pi(x(t)),
\end{split}
\end{equation} 
where $f:\mathbb{R}^{n}\times \mathbb{R}^{m}\times\mathbb{R}^d\rightarrow\mathbb{R}^n$ denotes the given nonlinear CT plant dynamics; $\pi:\mathbb{R}^n\rightarrow\mathbb{R}^m$ denotes a state-feedback control policy; $x(t)\in\mathbb{R}^n$, $u(t)\in\mathbb{R}^m$, and $\omega(t)\in\mathbb{R}^d$ respectively, denote the state, the control input, and dynamic parametric variable, at time $t\in\mathbb{R}_{\geq0}$. 
The $\omega$-values are assumed bounded within a set $\Theta\subset\mathbb{R}^d$ with $0\in\Theta$. %satisfying $-\infty< \theta_\ell \leq 0 \leq \theta_u < \infty$. 
Also a ``safe" operational domain $\mathcal{X}\subseteq \mathbb{R}^n$ containing the origin is specified; operating the system at any $x\notin\mathcal{X}$ is deemed unsafe, and hence must be avoided. For $\theta\in\Theta$, $x^*_\theta \in\mathbb{R}^n$ is an equilibrium of (\ref{clsys}) if $f(x^*_\theta, \pi(x^*_\theta), \theta) = 0$. As standardly assumed in literature \cite{9388885, vamvoudakis2010online,9424176,dailyapunov}, we assume that the equilibrium does not change with parameter variation, i.e., $x^*_\theta\equiv x^*$. Also, without loss of generality (WLOG), through a change of coordinates if needed, we take $x^*=0$ and $\pi(0)=0$. 

Let $\NewT$ denote the space of all 
$\mathbb{R}^d$-valued parametric evolutions $\omega:\mathbb{R}_{\geq0}\rightarrow\Theta$. For a $\omega\in\NewT$, $\omega^t:[0,t)\rightarrow\Theta$ denotes its ``$t$-prefix'', i.e., $\omega^t(\tau)=\omega(\tau)~\forall~\tau\in[0,t)$. 
The trajectory of (\ref{clsys}) under the parametric evolution $\omega\in\NewT$, when initialized at $x\in\mathbb{R}^n$, is denoted $\psi_{\pi}(\omega^{t},x)\in\mathbb{R}^n$ for any $t\in\mathbb{R}_{\geq0}$; its existence and uniqueness are assured under the following assumption:
\begin{ass}\label{ass:1} The plant dynamics $f(\cdot,\cdot,\cdot)$ is locally continuously differentiable. 
\end{ass}
\noindent Assumption \ref{ass:1} implies that $f(\cdot,\cdot,\cdot)$ is locally Lipschitz, which is sufficient for local \emph{existence and uniqueness} of $\psi_{\pi}(\omega,x(0))$ uniformly over $\omega\in\NewT$. This assumption also allows for a decomposition of the dynamics into a pair of additive linear and nonlinear parameter-dependent portions, with the latter possessing a ``sector bound" (as introduced later in Section~\ref{sector_plant_control}). The stability and safety-related notions used in this paper are introduced next:
\begin{defn}\label{stabdef}[Stable Equilibrium, Stabilizing Controller, Stabilizability, Stability, and Region-of-attraction.] For the system (\ref{clsys}) and the set of parametric evolutions $\NewT$, if exists a policy $\pi(\cdot)$ and a corresponding neighborhood $\mathcal{R}_{\pi,\NewT}$ of the origin such that uniformly over $\omega\in\NewT$:
\begin{equation}
    x \in \mathcal{R}_{\pi,\NewT} \Rightarrow 
        \lim_{t \to \infty} \psi_{\pi}(\omega^t,x)=0,
\end{equation}
then the origin is a \textbf{$\NewT$-stable equilibrium} under $\pi(\cdot)$; $\pi(\cdot)$ is a {\bf locally $\NewT$-stabilizing controller} (or simply $\NewT$-stabilizing controller); the system is \textbf{locally $\NewT$-stabilizable} (or simply $\NewT$-stabilizable); the controlled system is \textbf{locally $\NewT$-stable} (or simply $\NewT$-stable) under $\pi(\cdot$), and $\mathcal{R}_{\pi,\NewT}$ is a \textbf{$\NewT$-region-of-attraction} ($\NewT$-RoA) under $\pi(\cdot)$.
\end{defn}

An RoA under certain conditions serves as an RSIS defined next.
\begin{defn}\label{stabdef}[Robust Safe Initialization Set (RSIS).] For the given safe domain $\mathcal{X}$ and a $\NewT$-stabilizing controller $\pi(\cdot)$, if $\mathcal{S}_{\pi,\NewT}^{\mathcal{X}}\subseteq\mathcal{X}$ is a $\NewT$-RoA of system (\ref{clsys}) and satisfies the following: 
\begin{equation}
    x \in \mathcal{S}_{\pi,\NewT}^{\mathcal{X}} \Rightarrow 
        \psi_{\pi}(\omega^t,x)\in\mathcal{X}~\forall~t\in\mathbb{R}_{\geq0},
\end{equation}
then $\mathcal{S}_{\pi,\NewT}^{\mathcal{X}}$ is an RSIS. The space of all $\mathcal{S}_{\pi,\NewT}^{\mathcal{X}}$'s is denoted $\mathbb{S}_{\pi,\NewT}^{\mathcal{X}}$.
\end{defn}

We use the notion of Lipschitz bound to constrain a controller $\pi(\cdot)$, which is formalized below:
\begin{defn}\label{lipsdef}[Lipschitz function and bound.] A function $g:\mathcal{X}\rightarrow\mathcal{Y}$, where $\mathcal{X},~\mathcal{Y}$ are domains with $\left\|\cdot\right\|_{\infty}$ defined, is called \textbf{Lipschitz} w.r.t. $\left\|\cdot\right\|_{\infty}$ (or simply Lipschitz) if there exists $0\leq L<\infty$ satisfying:
\begin{equation}\label{lipdef}
    \left\|g(x_1)-g(x_2)\right\|_{\infty} < L\left\|x_1-x_2\right\|_{\infty}, ~\forall~x_1,x_2\in\mathcal{X},
\end{equation}
and $L$ is called a \textbf{Lipschitz bound}. 
\end{defn}
\noindent The set of state-feedback controls that evaluate to zero at the origin and are Lipschitz-bounded by $L\in\mathbb{R}_{\geq 0}$ is denoted $\Pi_L$.  

%\textit{\textbf{Objective}}: 
\subsection{Objective and Mathematical Formulation}
Given the system (\ref{clsys}) satisfying Assumption \ref{ass:1}, our first objective is to identify the class of state-feedback NN-based controllers so that any controller in that class is $\NewT$-stabilizing, and possesses a maximal common RSIS. Our next objective is to find an optimal NN-based controller in the identified class (which maximizes a long-run expected utility under random initializations and parametric variations). 
% In particular, we synthesize a controller $\pi^*(\cdot)$ along with its maximal RSIS $\mathcal{S}^{\mathcal{X}}_{\pi^*,\NewT}$, which is the solution of the following bi-objective optimization problem:
% \begin{align}\label{opt_con}
% \begin{split} \pi^* :=
% \underset{\colvec[.7]{\pi\in\Pi_{\NewT}}}{\text{argmax}} & \Bigg[\text{vol}(\mathcal{S}^{\mathcal{X}}_{\pi,\NewT}) +\mathfrak{w}.\Bigg(\mathop{\mathbb{E}}_{
%     \colvec[.7]{
%     \omega\sim\mathbb{P}(\NewT)\\
%     x\in\mathbb{P}(\mathcal{S}^{\mathcal{X}}_{\pi,\NewT})
%     }}
%     \Big[\lim_{\colvec[.7]{T \to \infty}}J_{\pi}(\omega^T,x)\Big]\Bigg)\Bigg]
% \end{split}
% \end{align}

% Due to nonconvexity of the space $\Pi_{\NewT}$ and presence of the two terms in the objective of (\ref{opt_con}) that are nonconvex with respect to $\pi(\cdot)$, in general it is computationally hard to solve for a global optimal solution $\pi^*(\cdot)$. Our solution strategy involves an alternative representation, in which 
%Here, restricting $\pi^*$ to be Lipschitz, we propose an approximate local optimal solution of (\ref{opt_con}). Let $\Pi_L$ denote the space of all state-feedback controllers that evaluate to zero at the origin and are Lipschitz-bounded by $L\in\mathbb{R}_{\geq 0}$. 
WLOG, a controller $\pi(\cdot)$ is written as a superposition of a linear gain ``nominal controller'' $\pi_{K}(x):=K.x$ for some $K\in\mathbb{R}^{m\times n}$ and an additive ``perturbation controller" $\pi_\rho:\mathbb{R}^n\rightarrow\mathbb{R}^m$ around the nominal one, to be implemented via an NN having parameter $\rho$, i.e., $\pi=\pi_{K}+\pi_\rho$. Then for the first objective, we compute an optimal linear state-feedback gain $K^*\in\mathbb{R}^{m\times n}$ for the nominal controller and a maximal Lipschitz bound $L^*\in\mathbb{R}_{\geq 0}$ for the perturbation controller such that the corresponding RSIS  $\mathcal{S}^*$ is maximal:
\begin{align}\label{stage_1}
\begin{split}  K^*, L^*, \mathcal{S}^* :=&
\underset{\colvec[1.0]{K\in\mathbb{R}^{m\times n},\\L\in\mathbb{R}_{\geq 0},~\mathcal{S}\subseteq\mathcal{X}}}{\text{argmax}}\Big[\text{vol}
(\mathcal{S}) +\mathfrak{w}.L\Big]\\
&\text{s.t. }
\mathcal{S}
\in\bigcap_{\pi_\rho\in\Pi_L} \mathbb{S}_{(\pi_K+\pi_\rho),\NewT}^{\mathcal{X}},
\end{split}
\end{align}
where for a compact set $\mathcal{S}\subseteq\mathbb{R}^n$, vol$(\mathcal{S}):=\int_{\mathcal{S}} 1dx$ denotes its volume, and
$\mathfrak{w}\geq0$ is a tunable ``trade-off'' parameter. Note the objective is to maximize vol$(\mathcal{S})$ to have a maximal RSIS (the fact that it is a common RSIS is ensured by the constraint $\mathcal{S}\in\bigcap_{\pi_\rho\in\Pi_{L^*}} \mathbb{S}_{(\pi_{K}+\pi_\rho),\NewT}^{\mathcal{X}}$) and also to maximize $L$ to have the largest possible search space for the candidate NN controllers. When the solution set $\mathcal{S}^*$ is non-empty, a state-feedback controller $\pi=\pi_{K^*}+\pi_\rho$ is $\NewT$-stabilizing for any $\pi_\rho\in\Pi_{L^*}$. To achieve the first objective, we develop a sufficient condition of $\NewT$-stabilizability of (\ref{clsys}) in Section~\ref{stability}, which extends the existing Lyapunov-based stability results.

For the next objective, the optimal NN controller $\pi_{\rho^*}\in\Pi_{L^*}$ is designed (so that the overall optimal controller is $\pi^*=\pi_{K^*}+\pi_{\rho^*}$) to maximize an expected utility as defined next. For $\omega\in\NewT$, initial state $x\in\mathcal{S}^*$, %$x\in\mathcal{S}^{\mathcal{X}}_{\pi,\NewT}$, 
a reward function $r:\mathbb{R}^n\times\mathbb{R}^m\rightarrow\mathbb{R}$, and time horizon $T\in\mathbb{R}_{\geq 0}$, let the $T$-horizon expected utility $J_{\pi}(\omega^T,x)\in\mathbb{R}$ be:
\begin{equation}\label{utility}
    J_{\pi}(\omega^T,x):=\int_{0}^{T} r(\psi_{\pi}(\omega^t,x),\pi(\psi_{\pi}(\omega^t,x))) dt.
\end{equation}
Then the optimal perturbation controller $\pi_{\rho^*}\in\Pi_{L^*}$ 
is computed by solving the following optimization problem:
\begin{equation}\label{stage_2}
 \pi_{\rho^*} :=
\underset{\colvec[1.0]{\pi_\rho\in\Pi_{L^*}}}{\text{argmax}} \Bigg[\mathop{\mathbb{E}}_{
    \colvec[.7]{
    \omega\sim\mathbb{P}(\NewT),\\
    x\in\mathbb{P}(\mathcal{S}^*)
    }}
    \Big[J_{\pi_{K^*}+\pi_\rho}(\omega^T,x)\Big]\Bigg],
\end{equation}
where the distributions $\mathbb{P}(\NewT),\mathbb{P}(\mathcal{S}^{\mathcal{X}}_{\pi,\NewT})$ in (\ref{stage_2}) are taken to be uniform in case those are unknown. 
%The first stage of the above formulation, i.e. (\ref{stage_1}), finds an optimal linear state feedback gain $K^*\in\mathbb{R}^{m\times n}$ for the nominal controller and a maximal Lipschitz bound $L^*\in\mathbb{R}_{\geq 0}$ for the perturbation controller such that  $\mathcal{S}^*\in\bigcap_{\pi_\rho\in\Pi_{L^*}} \mathbb{S}_{\pi_{K^*}+\pi_\rho,\NewT}^{\mathcal{X}}$ is the maximal RSIS, and when this is non-empty, a state-feedback controller $\pi=\pi_{K^*}+\pi_\rho$ is $\NewT$-stabilizing for any $\pi_\rho\in\Pi_{L^*}$. 
% It should be noted that the search space in (\ref{stage_1}) is convex, and its objective is simplified compared to (\ref{opt_con}) by replacing its nonconvex expected utility term with a linear budget in the form of a
%The Lipschitz bound $L$ in any case needs to be maximized to allow for a larger search space for a candidate NN-controller, whereas $(K,L,\mathcal{S})$ is optimized to also attain a maximal RSIS. The optimal perturbation controller $\pi^*_\rho$ is next designed in the second stage (\ref{stage_2}) to maximize the expected utility, using $(K^*,L^*,\mathcal{S}^*)$ resulting from the first stage.  
\begin{figure}[htbp]
\centering
%\vspace*{-.15in}
\includegraphics[width=1.7in]{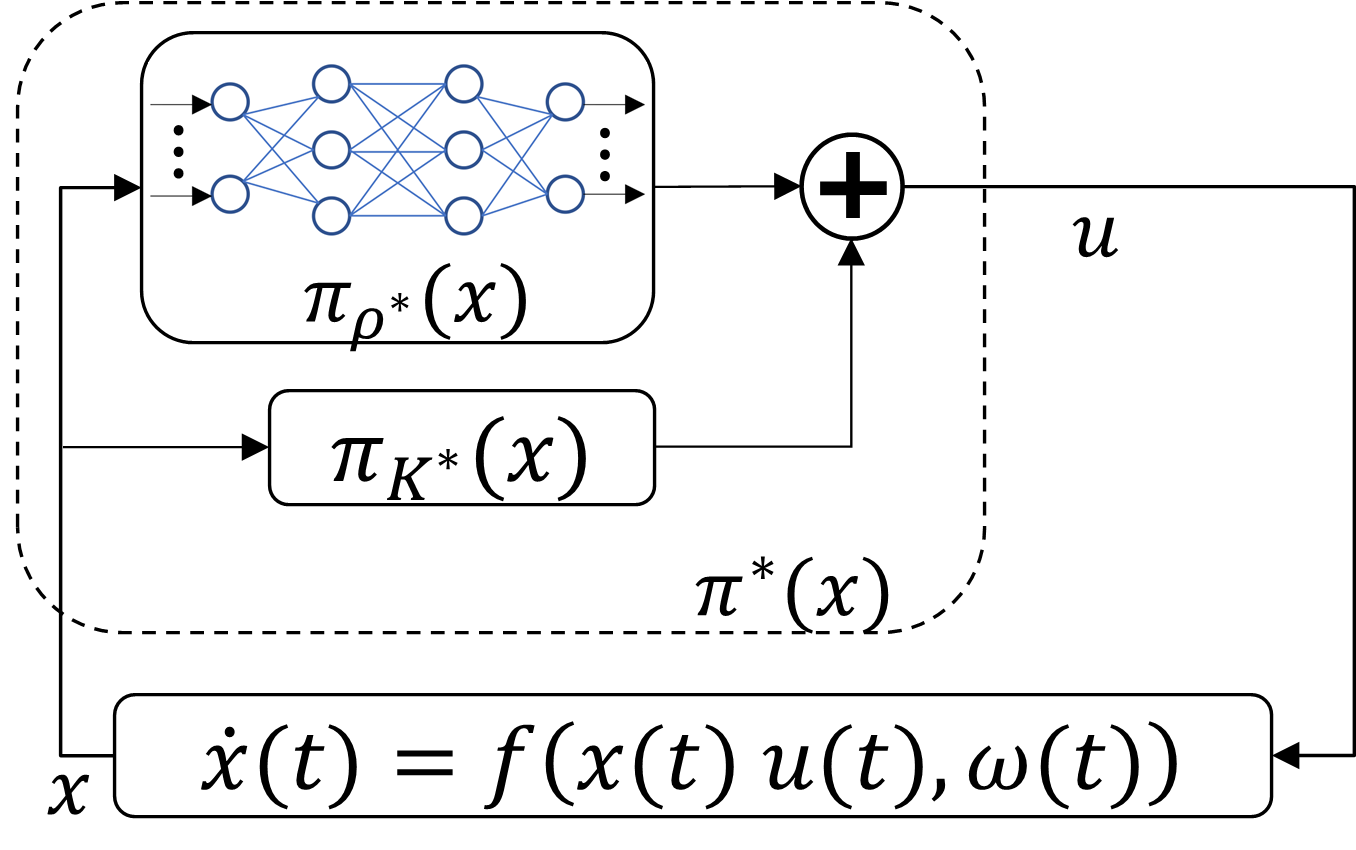}
% \vspace{-.1in}
\caption{Block diagram of control architecture}
\label{control_arch}
% \vspace*{-.15in}
\end{figure}
A schematic of the overall control architecture is shown in Fig.~\ref{control_arch} and a high-level flow-chart of the proposed overall method is shown in Fig. \ref{flow_chart_oa}.\\ 
\begin{figure}[htbp]
\centering
% \vspace*{-.1in}
\includegraphics[width=2.3in,trim={0cm 0cm 1cm 0cm}]{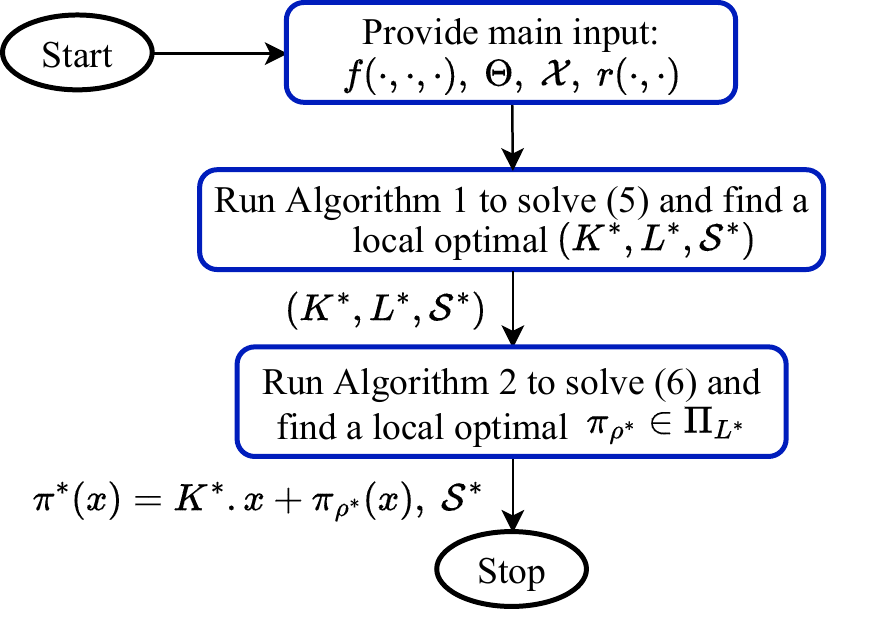}
% \vspace*{-.1in}
\caption{Flow-chart of the proposed solution approach}
\label{flow_chart_oa}
\end{figure}

Note (\ref{stage_1}) and (\ref{stage_2}) are both nonconvex. We propose Algorithm~\ref{algo1} in Section~\ref{optlibbound} to iteratively find a local optimal $(K^*,L^*,\mathcal{S}^*)$ solving (\ref{stage_1}). %relying on our sufficient condition of $\NewT$-stability from Section \ref{stability} that extends existing Lyapunov-based stability results. 
To find a local optimal control $\pi_{\rho^*}\in\Pi_{L^*}$ solving (\ref{stage_2}), Algorithm \ref{algo2} is proposed in Section \ref{comp}, which extends the traditional actor-critic RL \cite{mnih2016asynchronous} to attain an SGT of the NN controller by way of ensuring its Lipschitz boundedness. %The technical details and parameters of these algorithms are described later in their respective sections. 

\section{Optimal Nominal Control, Maximal Lipschitz Bound for NN Controller, and Maximal RSIS}\label{sector_plant_control}
To enable $\NewT$-stability analysis of the system (\ref{clsys}), we introduce in Section \ref{decom} an equivalent representation of (\ref{clsys}) in the form of a linear system, perturbed by a ``nonlinear and parameter variation (NPV)'' component, appearing as an additive term. A quadratic constraint (QC) that a Lipschitz-bounded controller $\pi_\rho\in\Pi_L$ necessarily satisfies is presented in Section \ref{nnsb0}.  In Section \ref{nnsb}, we introduce the notion of ``local $(\munderbar{\mathcal{L}},\bar{\mathcal{L}})$-sector'' to characterize a bound for the NPV. A method to compute the sector-defining parameters $(\munderbar{\mathcal{L}},\bar{\mathcal{L}})$ is also presented, and a necessary condition for the NPV to satisfy such a bound in the form of a QC is developed. In Section \ref{stability}, given a Lipschitz bound for $\pi_\rho$, a sector bound for the system NPV, and a safe operating domain $\mathcal{X}\subset\mathbb{R}^n$, a sufficient condition of $\NewT$-stability of system (\ref{clsys}) is introduced by extending Lyapunov's theory employing the above QCs. This is subsequently used in Section \ref{optlibbound} to develop an algorithm to iteratively search for a solution of (\ref{stage_1}).

\subsection{An Equivalent Representation of the Nonlinear System}\label{decom}
Following Assumption \ref{ass:1}, let $(A_\theta, B_{\theta})$ represent the linearized dynamics of the plant in (\ref{clsys}) at the origin for a certain parameter value $\theta\in\Theta$, where, respectively, the state and the input matrices $A_{\theta}\in\mathbb{R}^{n\times n}, B_{\theta}\in\mathbb{R}^{n\times m}$ under zero control are defined as: $A_{\theta}:=J_{f,x}\big|_{\colvec[0.7]{x=0\\u=0}}$ and $B_{\theta}:=J_{f,u}\big|_{\colvec[0.7]{x=0\\u=0}}$. Then the nonlinear dynamics under a state-feedback control $u=K.x+u_\rho$ for a $K\in\mathbb{R}^{m\times n}$ and a $u_\rho\in\mathbb{R}^m$ can be written as:
\begin{equation}
    f(x,K.x+u_\rho,\omega)=A_{0,K}.x+B_{0}.u_\rho+\eta_{K}(x,u_\rho,\omega),
\end{equation}
where the pair $(A_{0,K},B_{0})$ denotes the linearized dynamics of (\ref{clsys}) at the origin with parameter value $\theta=0$ under the feedback control 
$u=K.x+u_\rho$. In other words, $A_{0,K}:=(J_{f,x}+J_{f,u}.J_{u,x})\big|_{\colvec[0.7]{x=0,\theta=0\\u_\rho=0}}\equiv A_0+B_0.K$. 
Further the %$J_{f,u_\rho}\big|_{\colvec[0.7]{x=0,\theta=0\\u_\rho=0}}=J_{f,u}\big|_{\colvec[0.7]{x=0,\theta=0\\u=0}}=B_0$, and 
additive perturbation term is simply the difference: 
\[\eta_{K}(x,u_\rho,\omega):=f(x,K.x+u_\rho,\omega)-A_{0,K}.x-B_{0}.u_\rho,\] 
that is $\theta$-dependent. % and in the special cases of locally infinitely differentiable $f(\cdot,\cdot,\cdot)$, includes all nonlinear terms of Taylor's expansion of $f(x,\pi_K(x)+u_\rho,\theta)$ around $f(0,0,0)=0$.
$\NewT$-stability of the system (\ref{clsys}) under a state-feedback controller $u(x)=K.x+\pi_\rho(x)$ is then equivalent to $\NewT$-stability of the following system: 
\begin{equation}\label{cl_linear}
\begin{split}
    &\dot{x}(t)=A_{0,K}.x(t)+\underbrace{B_{0}.u_\rho(t)+\eta_{K}(x(t),u_\rho(t),\omega(t))}_{\text{NPV}:~ \zeta_K(x(t),u_\rho(t),\omega(t))},\\
    &u_\rho(t)=\pi_\rho(x(t))
\end{split}
\end{equation}
where the effect of the parametric variation and the nonlinearities underlying $f(\cdot,\cdot,\cdot)$ and $u_\rho(\cdot)$ is viewed as a disturbance 
\[\zeta_{K}(x,u_\rho,\theta):=f(x,K.x+u_\rho(x),\theta)-A_{0,K}.x,\] 
additive to the linear system $\dot{x}=A_{0,K}.x$ that we refer to as the ``{\em nominal system}''. 

\subsection{Quadratic condition from Lipschitz-bounded Control}\label{nnsb0}
% Here we introduce the notion of $L$-Lipschitz ($L\in\mathbb{R}_{>0}$) NN controller. A necessary condition of the same in form of a QC is also provided.
For an NN-based perturbation controller $\pi_\rho\in\Pi_L$, we define the notion of ``$L$-bounded control-subspace'' based on its Lipschitz-boundedness property:
\begin{defn}
[$L$-bounded control-subspace.] For a Lipschitz bound $L\in\mathbb{R}_{\geq0}$ and a domain $\mathcal{X}\subset\mathbb{R}^n$, the {\bf $L$-bounded control-subspace} $\mathcal{U}_{L,\mathcal{X}}\subset\mathbb{R}^m$ of a controller $\pi_\rho\in\Pi_L$ is: 
\begin{equation}\label{smoothcs}
    \mathcal{U}_{L,\mathcal{X}}\!\!:=\!\!\big\{u_\rho\!\!\in\!\!\mathbb{R}^m~\!\!\big|\!~\exists~\! x\!\in\!\mathcal{X}\!:\!\pi_\rho(x)\!=\!u_\rho,\left\|u_\rho\right\|_{\infty}\!\leq\! L\left\|x\right\|_{\infty}\!\!\big\}.
\end{equation} 
\end{defn}

Next, we  provide a necessary condition for a controller $\pi_\rho(\cdot)\in\Pi_L$ to be Lipschitz-bounded by $L$, in form of a QC, which is a variation of Lemma 4.2 of \cite{8618996}:
\begin{prop}\label{qcpi}
For a Lipschitz constant $L\in\mathbb{R}_{\geq0}$, let $\pi_{\rho}(\cdot)\in\Pi_{L}$ be a  controller (with $\pi_{\rho}(0)=0$). Then there exists $\chi:\mathbb{R}^n\rightarrow\mathbb{R}^{mn}$  satisfying $\chi(0)=0$, such that:
\begin{equation}\label{eq:lipschitz1}
    \pi_{\rho}(x) = \underbrace{[\mathbf{I}_m\odot\mathbf{1}_{1\times n}]}_{:=Q}.\chi(x),
\end{equation}
and the following QC globally holds for all $\gamma_{i.j}\geq0$ $\forall~i\in1,\ldots,m$, $j\in1,\ldots,n$:
\begin{equation}\label{qcmat}
    \begin{bmatrix}
        x\\
        \chi
    \end{bmatrix}^T
    \begin{bmatrix}
        L^2.diag(\{\Gamma_j\}) & \mathbf{0}_{n\times mn}\\
        * & diag(\{-\gamma_{i.j}\})
    \end{bmatrix}
    \begin{bmatrix}
    *
    \end{bmatrix}
    \geq0,
\end{equation}
where $\Gamma_j:=\sum_{i=1}^m\gamma_{i.j}$. 
\end{prop}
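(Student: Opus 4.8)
The plan is to build the component function $\chi$ explicitly from $\pi_\rho$ so that the structural identity \eqref{eq:lipschitz1} holds by construction, and then verify that the Lipschitz bound \eqref{lipdef} forces each scalar component of $\chi$ to be dominated by the corresponding coordinate of $x$, which is exactly what the quadratic form \eqref{qcmat} encodes. First I would define, for each output index $i\in\{1,\dots,m\}$ and each input index $j\in\{1,\dots,n\}$, a scalar function $\chi^{(i,j)}(x)$ that represents ``the $j$-th coordinate's share of the $i$-th output of $\pi_\rho$''; stacking these $mn$ scalars in the order matching the Kronecker pattern $Q = \mathbf{I}_m\odot\mathbf{1}_{1\times n}$ gives $\chi(x)\in\mathbb{R}^{mn}$ with $\pi_\rho(x)=Q\chi(x)$, since $Q$ simply sums the $n$ entries in each block of length $n$. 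Taking $\chi^{(i,j)}(0)=0$ is immediate from $\pi_\rho(0)=0$ once the split is chosen to respect this (e.g. a proportional split of $\pi_\rho^i(x)$ across the coordinates, or $\chi^{(i,j)}(x) = \pi_\rho^i(x)\,x^j/\|x\|$-type weights with the degenerate case handled separately — following the construction in Lemma 4.2 of \cite{8618996}).

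Next I would expand the quadratic form on the left-hand side of \eqref{qcmat}. Because the off-diagonal block is zero and both diagonal blocks are themselves diagonal, the whole expression collapses to a single sum:
\begin{equation*}
    \sum_{j=1}^n L^2\,\Gamma_j\,(x^j)^2 \;-\; \sum_{i=1}^m\sum_{j=1}^n \gamma_{i.j}\,\big(\chi^{(i,j)}(x)\big)^2
    \;=\; \sum_{i=1}^m\sum_{j=1}^n \gamma_{i.j}\Big(L^2 (x^j)^2 - \big(\chi^{(i,j)}(x)\big)^2\Big),
\end{equation*}
using $\Gamma_j=\sum_{i=1}^m\gamma_{i.j}$ to redistribute the first sum over the pairs $(i,j)$. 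Since every $\gamma_{i.j}\geq 0$, it suffices to show the pointwise inequality $\big(\chi^{(i,j)}(x)\big)^2 \leq L^2 (x^j)^2$ for each pair, or more conservatively $\big(\chi^{(i,j)}(x)\big)^2 \leq L^2\|x\|_\infty^2$ together with a bound that recovers the per-coordinate form. This is where the Lipschitz hypothesis enters: applying \eqref{lipdef} with $x_2=0$ gives $\|\pi_\rho(x)\|_\infty < L\|x\|_\infty$, hence $|\pi_\rho^i(x)| < L\|x\|_\infty$ for every $i$, and by the chosen split each $|\chi^{(i,j)}(x)|$ is bounded by the piece of this it is allocated, yielding $\sum_j (\chi^{(i,j)}(x))^2 \le$ the appropriate multiple of $L^2\|x\|_\infty^2$; pushing the $\|x\|_\infty^2$ back inside as $\sum_j (x^j)^2$-type terms via the coordinate that attains the max completes the argument. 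The identity $\chi(0)=0$ makes the $x=0$ case trivial.

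The main obstacle I anticipate is choosing the split of $\pi_\rho^i(x)$ into the $n$ components $\chi^{(i,j)}(x)$ so that \emph{simultaneously} (a) the sum over $j$ reproduces $\pi_\rho^i(x)$ exactly, (b) each $\chi^{(i,j)}$ vanishes at the origin and is well-defined (no division-by-zero pathology as $x\to 0$), and (c) the resulting per-pair bound is tight enough to make the weighted sum nonnegative for \emph{all} nonnegative choices of the multipliers $\gamma_{i.j}$, not just a convenient one — this last point is what forces the bound to be established coordinatewise rather than merely in aggregate. I would handle (b) by defining the weights using a normalized version of $x$ (say $x^j/\|x\|_1$ or $x^j/\|x\|_\infty$ with the argument-max coordinate as tie-breaker) and verifying continuity at $0$ separately, and handle (c) by showing $|\chi^{(i,j)}(x)| \le L|x^j|$ directly from the weighting, after which the collapsed sum above is manifestly a nonnegative combination of nonnegative terms. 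The rest is routine bookkeeping with the Kronecker structure of $Q$, which I would not spell out in full.
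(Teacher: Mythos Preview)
Your approach is the paper's: write $\chi^{(i,j)}(x)=\delta_{ij}(x)\,x^j$ with $|\delta_{ij}(x)|\le L$, so that the per--coordinate bound $(\chi^{(i,j)})^2\le L^2(x^j)^2$ is immediate and your (correct) expansion of the quadratic form into $\sum_{i,j}\gamma_{i.j}\big(L^2(x^j)^2-(\chi^{(i,j)})^2\big)$ finishes the argument. The paper simply asserts the existence of such $\delta_{ij}\in[-L,L]$ from $|\pi_\rho^i(x)|\le L\sum_j|x^j|$ without writing an explicit formula.

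There is one genuine slip in your construction, though. Neither of your proposed weights $x^j/\|x\|_1$ nor $x^j/\|x\|_\infty$ sums to $1$, so condition~(a) fails and you do not recover $\pi_\rho^i(x)$. The weight that works is $|x^j|/\|x\|_1$: setting $\chi^{(i,j)}(x):=\pi_\rho^i(x)\,|x^j|/\|x\|_1$ for $x\neq 0$ (and $0$ at the origin) gives $\sum_j\chi^{(i,j)}=\pi_\rho^i(x)$, and
\[
|\chi^{(i,j)}(x)|\;\le\;L\,\|x\|_\infty\cdot\frac{|x^j|}{\|x\|_1}\;\le\;L\,|x^j|
\]
since $\|x\|_\infty\le\|x\|_1$. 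With this in hand the per--coordinate inequality holds directly, and your detour through the aggregate estimate $(\chi^{(i,j)})^2\le L^2\|x\|_\infty^2$ --- which, as you yourself note, cannot yield the QC for \emph{all} nonnegative $\gamma_{i.j}$ --- should be dropped entirely.
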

\begin{proof} The proof is provided in Appendix \ref{qcmatproof}.
\end{proof}
\subsection{Bound on Nonlineariy and Parametric Variation}\label{nnsb}

To characterize a bound of the NPV $\zeta_K(\cdot,\cdot,\cdot)$ in (\ref{cl_linear}), we introduce the notion of ``local $(\munderbar{\mathcal{L}},\bar{\mathcal{L}})$-sector'':
\begin{defn}[Local $(\munderbar{\mathcal{L}},\bar{\mathcal{L}})$-sector.] For a $K\in\mathbb{R}^{m\times n}$, a Lipschitz bound $L\in\mathbb{R_{\geq0}}$, and matrices $\munderbar{\mathcal{L}},\bar{\mathcal{L}}\in\mathbb{R}^{n\times (n+m)}$ satisfying $\munderbar{\mathcal{L}}\leq\bar{\mathcal{L}}$, the NPV $\zeta_{K}(x,u_\rho,\theta)$ of system (\ref{cl_linear}) under a controller $\pi_\rho\in\Pi_L$ is said to be {\bf locally $(\munderbar{\mathcal{L}},\bar{\mathcal{L}})$-sector} bounded over $\mathcal{X}\subset\mathbb{R}^{n}$, if the following:
\begin{equation}\label{secquad1}
\begin{split}
    &\munderbar{\mathcal{L}}^{i,j}\leq J_{\zeta_K,x}^{i,j}\big|_{\begin{smallmatrix*}[l]
    &x=\hat{x}\\
    &\theta=\hat{\theta}\\
    &u_\rho=\hat{u}
    \end{smallmatrix*}}\leq\bar{\mathcal{L}}^{i,j},~\begin{matrix*}[l]
    \forall~ i,j\in1,\ldots,n\\
    \end{matrix*}, \text{and}\\
    &\munderbar{\mathcal{L}}^{i,j+n}\leq J_{\zeta_K,u_\rho}^{i,j}\big|_{\begin{smallmatrix*}[l]
    &x=\hat{x}\\
    &\theta=\hat{\theta}\\
    &u_\rho=\hat{u}
    \end{smallmatrix*}}\leq\bar{\mathcal{L}}^{i,j+n},~\begin{matrix*}[l]
    \forall~ i\in1,\ldots,n\\
    \forall~j\in1,\ldots,m
    \end{matrix*}
\end{split}
\end{equation}
holds uniformly $\forall~\hat{x}\in\mathcal{X}$, $\hat{\theta}\in\Theta$, and $\hat{u}\in\mathcal{U}_{L,\mathcal{X}}$, where $\mathcal{U}_{L,\mathcal{X}}\subset\mathbb{R}^m$ denotes the $L$-bounded control-subspace corresponding to $\mathcal{X}$. 
\end{defn}

\noindent\textbf{\textit{Computation of $(\munderbar{\mathcal{L}},\bar{\mathcal{L}})$-sector}}: Recall $\zeta_K(x,u_\rho,\theta)=f(x,K.x+u_\rho,\theta)-A_{0,K}.x$, and so $J_{\zeta_K,x}=J_{f,x}+J_{f,u}.J_{u,x}-A_{0,K}=J_{f,x}+J_{f,u_\rho}.K-A_{0,K}$ and $J_{\zeta_K,u_\rho}=J_{f,u_\rho}$. Thus following Assumption \ref{ass:1}, under which
$J_{f,x}$ and $J_{f,u_\rho}$ are well-defined locally, so are also $J_{\zeta_K,x}$ and $J_{\zeta_K,u_\rho}$. Then
the $(i,j)^{th}$ element of the sector defining matrices, given a $K\in\mathbb{R}^{m\times n}$, a $L\in\mathbb{R}_{\geq0}$, and a $\mathcal{X}\subset\mathbb{R}^n$ can be computed as below. $\forall~i,j\in1,\ldots,n$:
\begin{equation}\label{bounds1}
    \begin{split}
        \munderbar{\mathcal{L}}^{i,j} & := \inf_{\begin{smallmatrix*}[l]
    \hat{x}\in\mathcal{X},\hat{u}\in\mathcal{U}_{L,\mathcal{X}}\\
    \theta\in\Theta
    \end{smallmatrix*},}\bigg(J^{i,j}_{\zeta_K(\cdot,\cdot,\cdot),x}\Big|_{\begin{smallmatrix*}[l]
    x=\hat{x}\\
    u_\rho=\hat{u}
    \end{smallmatrix*}}\bigg),\\
    \bar{\mathcal{L}}^{i,j} & := \sup_{\begin{smallmatrix*}[l]
    \hat{x}\in\mathcal{X},\hat{u}\in\mathcal{U}_{L,\mathcal{X}}\\
    \theta\in\Theta
    \end{smallmatrix*},}\bigg(J^{i,j}_{\zeta_K(\cdot,\cdot,\cdot),x}\Big|_{\begin{smallmatrix*}[l]
    x=\hat{x}\\
    u_\rho=\hat{u}
    \end{smallmatrix*}}\bigg),\\
    \end{split}
\end{equation}
and $\forall~i\in\{1,\ldots,n\},~j\in\{1,\ldots,m\}$:
\begin{equation}\label{bounds2}
    \begin{split}
        \munderbar{\mathcal{L}}^{i,j+n} & := \inf_{\begin{smallmatrix*}[l]
    \hat{x}\in\mathcal{X},\hat{u}\in\mathcal{U}_{L,\mathcal{X}}\\
    \theta\in\Theta
    \end{smallmatrix*},}\bigg(J^{i,j}_{\zeta_K(\cdot,\cdot,\cdot),u_\rho}\Big|_{\begin{smallmatrix*}[l]
    x=\hat{x}\\
    u_\rho=\hat{u}
    \end{smallmatrix*}}\bigg),\\
    \bar{\mathcal{L}}^{i,j+n} & := \sup_{\begin{smallmatrix*}[l]
    \hat{x}\in\mathcal{X},\hat{u}\in\mathcal{U}_{L,\mathcal{X}}\\
    \theta\in\Theta
    \end{smallmatrix*},}\bigg(J^{i,j}_{\zeta_K(\cdot,\cdot,\cdot),u_\rho}\Big|_{\begin{smallmatrix*}[l]
    x=\hat{x}\\
    u_\rho=\hat{u}
    \end{smallmatrix*}}\bigg).\\
    \end{split}
\end{equation}
Note for simplicity, the infima (resp., suprema) in (\ref{bounds1})-(\ref{bounds2}) can be relaxed by replacing those with the respective lower (resp., upper) bounds at the cost of slight conservativeness to the sector. The value of each such bound can be computed to a desired degree of accuracy via a binary search using a satisfiability-modulo-theory (SMT) solver (such as dReal \cite{10.1007/978-3-642-38574-2_14}), wherein the constraints regarding a postulated lower/upper bound, the boundedness of state domain, the $L$-boundedness of control subspace, and the parametric set $\Theta$ get represented as the conjunction of certain first-order formulas over the reals.

Next, a necessary condition for $\zeta_K(x,u_\rho,\theta)$ to be $(\munderbar{\mathcal{L}},\bar{\mathcal{L}})$-sector bounded locally over $\mathcal{X}\subset\mathbb{R}^n$ is proposed, in form of a $(K,L,\mathcal{X},\Theta)$-dependent QC:% relating $x$, $u_\rho$, and $\zeta_K$:

\begin{prop}\label{equi} For a $K\in\mathbb{R}^{m\times n}$ and a $L\in\mathbb{R}_{\geq0}$, consider the $\mathbb{R}^{n}$-valued NPV $\zeta_K(\cdot,\cdot,\cdot)$ of system (\ref{cl_linear}) that is locally $(\munderbar{\mathcal{L}},\bar{\mathcal{L}})$-sector bounded over $\mathcal{X}\subset\mathbb{R}^n$ under a controller $u_\rho(t)=\pi_\rho(x(t))$ with $\pi_\rho(\cdot)\in\Pi_L$. Then for each $\theta\in\Theta$, exists $\xi_\theta:\mathbb{R}^n\rightarrow\mathbb{R}^{n(n+m)}$ satisfying $\xi_\theta(0)=0$, such that:
\begin{equation}\label{eq:lipschitz}
    \zeta_K\big(x,u_\rho,\theta) = \underbrace{[\mathbf{I}_n\odot\mathbf{1}_{1\times (n+m)}]}_{:=R}.\xi_\theta(x), ~\forall~x\in\mathcal{X}.
\end{equation}
Further for $i\in1,\ldots,n$ and $j\in1,\ldots,n+m$, let $c_{ij}:=(\munderbar{\mathcal{L}}^{i,j}+\bar{\mathcal{L}}^{i,j})/2$, $\bar{c}_{ij}:=\max(|\munderbar{\mathcal{L}}^{i,j}|,|\bar{\mathcal{L}}^{i,j}|)$, and $k_{ij}:=i+(j-1)n$. Then uniformly for any $\theta\in\Theta$, $\pi_\rho(\cdot)\in\Pi_L$, and $\Lambda\in\mathbb{R}^{n.(n+m)}\geq0$, the following locally holds:
\begin{equation}\label{plantqc}
    \begin{bmatrix}
        x\\
        \chi\\
        \xi_\theta
    \end{bmatrix}^T
    \begin{bmatrix}
     M_{x\Lambda} & \mathbf{0}_{n\times m.n} & N_{x\Lambda}\\
     * &  M_{\chi\Lambda} & N_{\chi\Lambda}\\
        
        * & * & M_{\xi\Lambda}
    \end{bmatrix}
    \begin{bmatrix}
    *
    \end{bmatrix}
    \geq0,~\forall~x\in\mathcal{X}, 
\end{equation}
where recall $u_\rho=\pi_\rho(x)=Q.\chi(x)$ from (\ref{eq:lipschitz1}), and the matrices $M_{x\Lambda}$, $M_{\chi\Lambda}$, $M_{\xi,\Lambda}$, $N_{x\Lambda}$, and $N_{\chi\Lambda}$ are as defined below:
\begin{equation}\label{kdef}
    \begin{split}
        &\begin{split}
        M_{x\Lambda} := diag\Big(\big(\sum_{i=1}^n\Lambda^{k_{ij}}(\bar{c}_{ij}^2-c_{ij}^2)~\big|~j\in1,\ldots,n\big)\Big),
        \end{split}\\
        &\begin{split}
        M_{\chi\Lambda} := Q^Tdiag\Big(\big(\sum_{i=1}^n&\Lambda^{k_{ij}}(\bar{c}_{ij}^2-c_{ij}^2)~\big|~j\in n+1,\ldots,n+m\big)\Big)Q,\\
        \end{split}\\
        &M_{\xi\Lambda} := diag\big(-\Lambda\big),\\
        &\begin{split}
        &N_{x\Lambda} := \big[D_{x,1}~D_{x,2}~\ldots~D_{x,n}\big], \text{where:}\\
        &D_{x,i}:=\Big[diag\Big(\big(\Lambda^{k_{ij}}.{c}_{ij}~\big|
        ~j\in1,\ldots,n\big)\Big)~~\mathbf{0}_{n\times m}\Big],
        \end{split}\\
        &\begin{split}
        &N_{\chi\Lambda} := Q^T.\big[D_{u,1}~D_{u,2}~\ldots~D_{u,n}\big], \text{where:}\\
        &D_{\chi,i}:=\Big[\mathbf{0}_{m\times n}~~diag\Big(\big(\Lambda^{k_{ij}}.{c}_{ij}~\big|
        ~j\in n+1,\ldots,n+m\big)\Big)\Big].
        \end{split}\\
    \end{split}
\end{equation}
\end{prop}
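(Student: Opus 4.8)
The plan is to construct $\xi_\theta$ explicitly by a coordinate-wise telescoping of $\zeta_K$, mirroring the construction behind Proposition~\ref{qcpi}, and then verify that the resulting componentwise bounds reassemble into the matrix inequality (\ref{plantqc}). Fix $\theta\in\Theta$. For $x\in\mathcal{X}$, abbreviate $v=v(x):=(x,\pi_\rho(x))\in\mathbb{R}^{n+m}$ and regard $\zeta_K(\cdot,\cdot,\theta)$ as a map of the $(n+m)$ scalars $v_1,\dots,v_{n+m}$ (the first $n$ in the state argument, the last $m$ in the $u_\rho$ argument). For $i\in1,\dots,n$ and $j\in1,\dots,n+m$, let $\xi_\theta^{(i,j)}(x)$ be the increment of $\zeta_K^i$ obtained by switching coordinate $j$ from $0$ to $v_j$ while coordinates $1,\dots,j-1$ are held at $v_1,\dots,v_{j-1}$ and $j+1,\dots,n+m$ at $0$; since $v$ is a function of $x$, so is $\xi_\theta$. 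By the one-dimensional fundamental theorem of calculus (valid by Assumption~\ref{ass:1}, which makes $\zeta_K$ locally $C^1$ just as $f$ is), this increment equals $g_{ij}(x)\,v_j$, where $g_{ij}(x)$ is the average of the corresponding Jacobian entry $J^{i,j}_{\zeta_K,x}$ (for $j\le n$) or $J^{i,j-n}_{\zeta_K,u_\rho}$ (for $j>n$) taken along that axis-aligned segment.

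Two facts then follow at once. First, $\xi_\theta(0)=0$, since $x=0$ forces $\pi_\rho(0)=0$, hence $v(0)=0$, so every increment vanishes. Second, summing the increments over $j=1,\dots,n+m$ telescopes to $\zeta_K^i(v_1,\dots,v_{n+m},\theta)-\zeta_K^i(0,\dots,0,\theta)$, and $\zeta_K(0,0,\theta)=f(0,0,\theta)=0$ because $x^*=0$ and $\pi(0)=0$; this gives $\zeta_K=R\,\xi_\theta$ with $R=\mathbf{I}_n\odot\mathbf{1}_{1\times(n+m)}$, i.e.\ (\ref{eq:lipschitz}). For the componentwise bound, each axis-aligned segment, read in the state/input slots, remains in $\mathcal{X}$ with its $u_\rho$-part in $\mathcal{U}_{L,\mathcal{X}}$, so the local $(\munderbar{\mathcal{L}},\bar{\mathcal{L}})$-sector property forces $\munderbar{\mathcal{L}}^{i,j}\le g_{ij}(x)\le\bar{\mathcal{L}}^{i,j}$, whence $|g_{ij}(x)-c_{ij}|\le(\bar{\mathcal{L}}^{i,j}-\munderbar{\mathcal{L}}^{i,j})/2\le\bar{c}_{ij}$ and therefore $(\xi_\theta^{(i,j)}(x)-c_{ij}v_j)^2=(g_{ij}(x)-c_{ij})^2v_j^2\le\bar{c}_{ij}^2v_j^2$.

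The remaining step is algebraic bookkeeping. Multiplying the last inequality by an arbitrary $\Lambda^{k_{ij}}\ge0$, summing over all $(i,j)$, and expanding the square yields
\[\sum_{i,j}\Lambda^{k_{ij}}\Big[(\bar{c}_{ij}^2-c_{ij}^2)\,v_j^2+2c_{ij}v_j\,\xi_\theta^{(i,j)}-(\xi_\theta^{(i,j)})^2\Big]\ge0.\]
Writing $v_j=x^j$ for $j\le n$ and $v_j=(Q\chi)^{j-n}$ for $j>n$ (recall $u_\rho=\pi_\rho(x)=Q\chi(x)$ from (\ref{eq:lipschitz1})), one matches the $v_j^2$-terms with $x^TM_{x\Lambda}x+\chi^TM_{\chi\Lambda}\chi$, the $(\xi_\theta^{(i,j)})^2$-terms with $\xi_\theta^TM_{\xi\Lambda}\xi_\theta$, and the bilinear terms with $2x^TN_{x\Lambda}\xi_\theta+2\chi^TN_{\chi\Lambda}\xi_\theta$, using exactly the block definitions in (\ref{kdef}); since $\Lambda\ge0$ was arbitrary, this is (\ref{plantqc}).

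I expect the main obstacle to be justifying that the telescoping segments stay inside the region on which the sector bound holds — that each partial point $(v_1,\dots,v_j,0,\dots,0)$ lies in $\mathcal{X}\times\mathcal{U}_{L,\mathcal{X}}$ when read in the state and input slots. This is immediate when $\mathcal{X}$ is a hyperrectangle (or, more generally, coordinatewise star-shaped about the origin) and the bounds in (\ref{bounds1})--(\ref{bounds2}) are, per the remark following them, taken over a slightly enlarged tube containing these segments; I would make this mild structural hypothesis on $\mathcal{X}$ explicit. A secondary, purely clerical point is reconciling the two index orderings — the block ordering $(i-1)(n+m)+q$ induced by $R$ versus $k_{ij}=i+(j-1)n$ used for $\Lambda$ — but since $k_{ij}$ is a bijection onto $\{1,\dots,n(n+m)\}$, ranging $\Lambda$ over $\mathbb{R}^{n(n+m)}_{\ge0}$ is the same as letting each $\Lambda^{k_{ij}}$ vary freely, so it does not affect the argument.
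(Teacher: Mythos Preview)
Your proof is correct and takes essentially the same route as the paper: a coordinate-wise telescoping of $\zeta_K$ from the origin to $(x,\pi_\rho(x))$, extracting per-component slopes bounded by the $(\munderbar{\mathcal{L}},\bar{\mathcal{L}})$-sector, squaring, weighting by $\Lambda^{k_{ij}}$, and reassembling into the block quadratic form~(\ref{plantqc}). The domain-containment concern you flag for the intermediate telescoping points is genuine but is equally glossed over in the paper's own argument, so your treatment is at least as careful as the original.
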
 
\begin{proof} The proof is provided in Appendix \ref{smoothnessproof}.
\end{proof}

\subsection{Lyapunov-based $\NewT$-Stability Certification}\label{stability}
We begin by recalling some existing Lyapunov-based stability-related results: 

\begin{defn}\label{lyaplemma}[Common Lyapunov function.] Consider the system (\ref{clsys}) under a given controller $\pi(\cdot)$. A continuously differentiable function $V:\mathcal{X}\rightarrow\mathbb{R}_{\geq0}$, where $\mathcal{X}\subset\mathbb{R}^n$ is a compact domain containing the origin, is a {\bf common Lyapunov function} (CLF) if uniformly for each $\omega\in\NewT$: 
\begin{equation}\label{lapth}
    \begin{split}
      V(x)& > 0,~\dot{V}(x)< 0,~\forall~ x\in\mathcal{X}\setminus\{0\};\\
                V(0) & = \dot{V}(0)  = 0.
    \end{split}
\end{equation}
\end{defn}

It is known that if a CLF exists for the system (\ref{clsys}), then the system is $\NewT$-stable, $\pi(\cdot)$ is $\NewT$-stabilizing, and the origin is a $\NewT$-stable equilibrium \cite{VU2005405,liberzon2003switching}. However, in general, finding a $\pi(\cdot)$ and its corresponding CLF is challenging. 

Taking $\pi(\cdot)$ to be of form $\pi(x)=\pi_K(x)+\pi_\rho(x)$ for a $K\in\mathbb{R}^{m\times n}$ and a $\pi_\rho\in\Pi_L$, along with the QC characterizations of the bound of $\pi_\rho(\cdot)$ and the local $(\munderbar{\mathcal{L}},\bar{\mathcal{L}})$-sector bound of the NPV in  (\ref{cl_linear}) (see Sections \ref{decom}-\ref{nnsb}), enables an efficient search for a CLF as demonstrated next: We state our key theorem, that for a given $(K,L)\in\mathbb{R}^{m\times n}\times\mathbb{R}_{\geq0}$, enables the verification of whether a state-feedback controller $\pi=\pi_K+\pi_\rho$ is $\NewT$-stabilizing for the system (\ref{clsys}) uniformly for each $\pi_\rho\in\Pi_L$, by way of a convex search for a quadratic CLF. 
\begin{thm}\label{main_th}  
Given a $L\in\mathbb{R}_{\geq0}$ and a neighborhood of the origin $\mathcal{X}\subset\mathbb{R}^n$, consider the system in (\ref{clsys}) under a controller $\pi(x)=\pi_K(x)+\pi_\rho(x)$ satisfying Assumption \ref{ass:1}, where $K\in\mathbb{R}^{m\times n}$ and $\pi_\rho\in\Pi_L$, so that its equivalent representation of (\ref{cl_linear}) and a corresponding local $(\munderbar{\mathcal{L}},\bar{\mathcal{L}})$-sector bound for its NPV exist. Then the system is $\NewT$-stable at the origin, uniformly for each $\pi_\rho\in\Pi_L$, if exist $K\in\mathbb{R}^{m\times n}, P\succ 0, \Lambda\geq0$, and $\gamma_{ij}\geq0$ for all $i\in1,\ldots,m$, $j\in1,\ldots,n$ satisfying:
\begin{equation}\label{lmi}
        \begin{split}
        & \begin{bmatrix}
            V_{L,\{\Gamma_j\},P,K} & * & *\\
            \mathbf{0}_{m.n\times n} & M_{\chi\Lambda}-diag(\{\gamma_{ij}\}) & *\\
            N_{x\Lambda}^T+R^T.P & N_{\chi\Lambda}^T & M_{\xi\Lambda}
        \end{bmatrix}
        \prec0,
        \end{split}
    \end{equation}
where recall  $\Gamma_j=\sum_{i=1}^m\gamma_{i.j}$ and $V_{L,\{\Gamma_j\},P,K}$ is defined as:
\begin{equation}
    V_{L,\{\Gamma_j\},P,K} = M_{x\Lambda}+L^2.diag(\{\Gamma_j\}) + P.A_{0,K}+A_{0,K}^T.P
\end{equation}
\end{thm}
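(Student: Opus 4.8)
The plan is to construct a quadratic candidate CLF $V(x) = x^T P x$ and show that the LMI in \eqref{lmi} is exactly the certificate that guarantees $V(x) > 0$ and $\dot{V}(x) < 0$ on $\mathcal{X}\setminus\{0\}$ uniformly over $\omega\in\NewT$ and over $\pi_\rho\in\Pi_L$; the existence of such a CLF then yields $\NewT$-stability by the classical result recalled after Definition \ref{lyaplemma}. Positive definiteness of $V$ is immediate from $P\succ 0$, so the work is entirely in the derivative condition.

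\medskip
\noindent\textbf{Step 1 (Express $\dot V$ via the equivalent representation).} Using \eqref{cl_linear}, write $\dot x = A_{0,K}x + \zeta_K(x,u_\rho,\theta)$ and $\zeta_K = R\,\xi_\theta(x)$ from \eqref{eq:lipschitz}, $u_\rho = Q\chi(x)$ from \eqref{eq:lipschitz1}. Then
\[
\dot V(x) = x^T(P A_{0,K} + A_{0,K}^T P)x + 2x^T P R\,\xi_\theta(x).
\]
Collecting the aggregated vector $z := [x^T\ \chi^T\ \xi_\theta^T]^T$, this is $z^T \Xi_0 z$ for a matrix $\Xi_0$ whose only nonzero blocks are $PA_{0,K}+A_{0,K}^TP$ in the $(1,1)$ position and $R^TP$ (symmetrized) in the $(1,3)/(3,1)$ positions.

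\medskip
\noindent\textbf{Step 2 (Add the two quadratic certificates with multipliers).} Proposition \ref{qcpi} gives, for every choice of $\gamma_{ij}\ge0$, the global inequality \eqref{qcmat}, i.e. $z^T \Xi_\gamma z \ge 0$ where $\Xi_\gamma$ has blocks $L^2\,diag(\{\Gamma_j\})$, $-diag(\{\gamma_{ij}\})$ (after conjugating the $\chi$-block appropriately by $Q$ — note \eqref{qcmat} is stated in the $\chi$ coordinates directly), and zeros elsewhere. Proposition \ref{equi} gives, for every $\Lambda\ge0$ and locally on $\mathcal{X}$, the inequality \eqref{plantqc}, i.e. $z^T \Xi_\Lambda z \ge 0$ with the block structure $M_{x\Lambda}, M_{\chi\Lambda}, M_{\xi\Lambda}$ on the diagonal and $N_{x\Lambda}, N_{\chi\Lambda}$ off-diagonal. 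By the S-procedure reasoning, if
\[
\Xi_0 + \Xi_\gamma + \Xi_\Lambda \prec 0,
\]
then $\dot V(x) = z^T\Xi_0 z \le z^T\Xi_0 z + z^T\Xi_\gamma z + z^T\Xi_\Lambda z \le z^T(\Xi_0+\Xi_\gamma+\Xi_\Lambda)z < 0$ whenever $z\ne 0$ — and $z=0$ forces $x=0$ since $\chi(0)=0$, $\xi_\theta(0)=0$ and the first block of $z$ is $x$ itself.

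\medskip
\noindent\textbf{Step 3 (Identify the sum with the LMI).} Now I would simply add the three block matrices and check that the result is precisely the matrix in \eqref{lmi}: the $(1,1)$ block becomes $M_{x\Lambda} + L^2 diag(\{\Gamma_j\}) + PA_{0,K} + A_{0,K}^TP = V_{L,\{\Gamma_j\},P,K}$; the $(2,2)$ block becomes $M_{\chi\Lambda} - diag(\{\gamma_{ij}\})$; the $(3,3)$ block is $M_{\xi\Lambda}$; the $(1,3)$ block is $N_{x\Lambda}^T + R^T P$ (after transposition, consistent with the lower-triangular display); the $(1,2)$ block is $\mathbf{0}_{n\times mn}$ since both $\Xi_\gamma$ and $\Xi_\Lambda$ have zero in that slot; and the $(2,3)$ block is $N_{\chi\Lambda}^T$. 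Hence \eqref{lmi} $\Leftrightarrow \Xi_0+\Xi_\gamma+\Xi_\Lambda\prec 0$, which by Step 2 gives $\dot V<0$ on $\mathcal{X}\setminus\{0\}$. Since \eqref{lmi} is feasible by hypothesis, $V=x^TPx$ is a CLF, so the system is $\NewT$-stable at the origin for every $\pi_\rho\in\Pi_L$.

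\medskip
I expect the main obstacle to be bookkeeping rather than conceptual: carefully matching the $Q$-conjugations and the coordinate conventions (the paper writes some QCs in the $x$-$\chi$-$\xi$ coordinates and others need the $u_\rho=Q\chi$ substitution) so that the off-diagonal $(1,2)$ block indeed vanishes and the $(1,3)$ block comes out as $N_{x\Lambda}^T+R^TP$ exactly, and making sure that the local (rather than global) validity of \eqref{plantqc} is correctly restricted to $x\in\mathcal{X}$ throughout, so that the CLF conditions of Definition \ref{lyaplemma} are only claimed on $\mathcal{X}$. A secondary subtlety is the strictness: \eqref{lmi} is a strict inequality, giving $z^T(\Xi_0+\Xi_\gamma+\Xi_\Lambda)z<0$ for all $z\ne0$, which combined with $z\ne0\Leftrightarrow x\ne0$ delivers the strict decrease $\dot V(x)<0$ required away from the origin; at $x=0$ one has $V=\dot V=0$ trivially.
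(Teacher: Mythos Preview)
Your proposal is correct and follows essentially the same S-procedure argument as the paper: form $V(x)=x^TPx$, write $\dot V$ in the lifted coordinates $z=[x^T\ \chi^T\ \xi_\theta^T]^T$, add the two nonnegative quadratic certificates from Propositions~\ref{qcpi} and~\ref{equi}, and observe that the resulting block matrix is exactly the one in \eqref{lmi}. The paper presents the same computation in the reverse direction (starting from \eqref{lmi} and peeling off the QC terms to isolate $\dot V<0$), and explicitly identifies the argument as an S-procedure at the end of its proof.
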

\begin{proof} 
See Appendix \ref{main_th_proof}.
\end{proof}

\noindent 
Recall the matrices $M_{x\Lambda}$, $M_{\chi\Lambda}$, $N_{x\Lambda}$, and $N_{\chi\Lambda}$ are derived from the $(\munderbar{\mathcal{L}},\bar{\mathcal{L}})$-sector, which reveals their inherent $(K,L,\mathcal{X}$, $\Theta)$-dependence. This dependence, along with the presence of the bilinear terms in (\ref{lmi}), makes the latter non-convex when both $K$ and $P$ are search variables. On the other hand, if a $K$ is given, (\ref{lmi}) becomes an LMI that can be solved efficiently, and the existence of a feasible $(P\succ0,\Lambda\geq0,\{\lambda_{ij}\geq0\})$ certifies the $\NewT$-stability  of (\ref{clsys}) with the corresponding $V(x)=x^T.P.x$ serving as a CLF. Our Algorithm~\ref{algo1} in the next section enables a local search for a quadruple $(K\in\mathbb{R}^{m\times n}, P\succ 0, \Lambda\geq0,\{\lambda_{ij}\geq0\})$ satisfying (\ref{lmi}).

\begin{cor}\label{roaex}[Existence of RSIS.]
Consider the setting of Theorem \ref{main_th}. If the LMI (\ref{lmi}) is feasible for a $P\succ0$, then exists $\sigma>\mathbb{R}_{>0}$ such that the ellipsoid $\mathcal{E}_{P,\sigma}:=\{x\in\mathbb{R}^n~|~x^TPx\leq \sigma\}$ is contained in a given safe domain $\mathcal{X}=\{x\in\mathbb{R}^n~|~a_i^T.x\leq b_i, i\in1,\ldots,n_{\mathcal{X}}\}$ and serves as an inner-estimate of the maximal RSIS of system (\ref{clsys}), uniformly for each $\pi_\rho\in\Pi_L$. 
\end{cor}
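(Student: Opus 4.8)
The plan is to read off from Theorem~\ref{main_th} that $V(x)=x^TPx$ is a common Lyapunov function (CLF) on $\mathcal{X}$, and then run the textbook sub-level-set argument; the only genuinely geometric ingredient is choosing $\sigma$ so that an ellipsoidal sub-level set of $V$ fits inside the polytope $\mathcal{X}$. \emph{Step 1 (a CLF is available):} By Theorem~\ref{main_th}, feasibility of (\ref{lmi}) with $P\succ0$ makes $V(x)=x^TPx$ a CLF for (\ref{clsys}) on $\mathcal{X}$ in the sense of Definition~\ref{lyaplemma}; in particular $V(x)>0$, $\dot V(x)<0$ for all $x\in\mathcal{X}\setminus\{0\}$, and this holds uniformly over $\omega\in\NewT$ and over every $\pi_\rho\in\Pi_L$ --- precisely because the Lipschitz QC (\ref{qcmat}) and the $(\munderbar{\mathcal{L}},\bar{\mathcal{L}})$-sector QC (\ref{plantqc}) feeding the proof of Theorem~\ref{main_th} are satisfied for all such $\pi_\rho$ and all $\theta\in\Theta$.

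\emph{Step 2 (fit an ellipsoid in $\mathcal{X}$):} Since $\mathcal{X}$ contains the origin and $\pi(0)=0$, we may take $b_i>0$ for every $i$ (if some $b_i=0$ the origin lies on $\partial\mathcal{X}$ and no RSIS with nonempty interior exists; we discard this degenerate case). For any $\sigma>0$ the support function of $\mathcal{E}_{P,\sigma}$ in the direction $a_i$ equals $\sqrt{\sigma\,a_i^TP^{-1}a_i}$, so $\mathcal{E}_{P,\sigma}\subseteq\{x:a_i^Tx\le b_i\}$ iff $\sigma\le b_i^2/(a_i^TP^{-1}a_i)$. Hence any
\[
0<\sigma\le\sigma^\star:=\min_{i\in1,\ldots,n_{\mathcal{X}}}\frac{b_i^2}{a_i^TP^{-1}a_i}
\]
yields $\mathcal{E}_{P,\sigma}\subseteq\mathcal{X}$, and $\mathcal{E}_{P,\sigma}$ is compact (as $P\succ0$) with the origin in its interior.

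\emph{Step 3 (invariance, safety, attraction):} Fix $\sigma\in(0,\sigma^\star]$, any $\pi_\rho\in\Pi_L$, any $\omega\in\NewT$, and any $x\in\mathcal{E}_{P,\sigma}$. As long as the trajectory $\psi_\pi(\omega^t,x)$ remains in $\mathcal{X}$ we have $\tfrac{d}{dt}V(\psi_\pi(\omega^t,x))=\dot V\le0$, so $t\mapsto V(\psi_\pi(\omega^t,x))$ is non-increasing and stays $\le\sigma$; a standard continuity argument on the maximal interval of existence then rules out the trajectory ever reaching $\partial\mathcal{X}$, since to do so it would first have to cross the level set $\{V=\sigma\}\subset\mathrm{int}\,\mathcal{X}$ from inside, contradicting this monotonicity. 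Therefore $\psi_\pi(\omega^t,x)\in\mathcal{E}_{P,\sigma}\subseteq\mathcal{X}$ for all $t\ge0$ (the safety requirement of Definition~\ref{stabdef}), and since the trajectory is trapped in the compact set $\mathcal{E}_{P,\sigma}$, Assumption~\ref{ass:1} gives a global-in-time solution. On $\mathcal{E}_{P,\sigma}$ we have $\dot V<0$ away from the origin, with the bound uniform in $\omega$ and $\pi_\rho$; the classical (non-autonomous) Lyapunov asymptotic-stability argument then gives $\lim_{t\to\infty}\psi_\pi(\omega^t,x)=0$ uniformly over $\omega\in\NewT$. Thus $\mathcal{E}_{P,\sigma}$ is a $\NewT$-RoA obeying the safety condition, i.e.\ an RSIS, uniformly for each $\pi_\rho\in\Pi_L$; as the pointwise union of all RSISs is again an RSIS, $\mathcal{E}_{P,\sigma}$ (and in particular $\mathcal{E}_{P,\sigma^\star}$) is an inner estimate of the maximal RSIS.

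\emph{Main obstacle:} Everything is routine once the ordering is respected --- the subtle point is that the CLF inequalities of Theorem~\ref{main_th} hold only on $\mathcal{X}$, so forward invariance of $\mathcal{E}_{P,\sigma}$ must be proved \emph{before} attraction, and this works only because $\sigma\le\sigma^\star$ forces the boundary level set $\{V=\sigma\}$ to lie strictly inside $\mathcal{X}$; one must also verify that local existence/uniqueness from Assumption~\ref{ass:1} extends to all $t\ge0$ once the trajectory is confined to the compact set $\mathcal{E}_{P,\sigma}$, and that the uniformity in $\omega$ (and in $\pi_\rho\in\Pi_L$) is preserved throughout, which it is since the Lyapunov conditions inherited from Theorem~\ref{main_th} are themselves uniform.
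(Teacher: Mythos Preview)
Your proposal is correct and follows essentially the same route as the paper's proof: invoke Theorem~\ref{main_th} to obtain the quadratic CLF $V(x)=x^TPx$ on $\mathcal{X}$, choose $\sigma$ so that the sub-level set $\mathcal{E}_{P,\sigma}$ sits inside $\mathcal{X}$, deduce forward invariance from $\dot V<0$, and conclude that $\mathcal{E}_{P,\sigma}$ is an RSIS uniformly over $\pi_\rho\in\Pi_L$. The only notable differences are cosmetic additions on your side---the explicit support-function formula $\sigma^\star=\min_i b_i^2/(a_i^TP^{-1}a_i)$ (the paper merely asserts existence of some $\sigma>0$) and the explicit check that Assumption~\ref{ass:1} plus compact confinement yields global-in-time solutions---neither of which changes the underlying argument.
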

\begin{proof}
See Appendix \ref{roaex_proof}
\end{proof}

\subsection{Optimal Nominal Control, Maximal Lipschitz Bound for NN Controller, and Inner-estimate of Maximal RSIS}\label{optlibbound}
We employ Theorem \ref{main_th} and Corollary \ref{roaex} to devise an iterative method of solving (\ref{stage_1}) in Algorithm \ref{algo1}, which finds a locally Pareto optimal pair $(K^*,L^*)$ and an inner-estimate of its corresponding maximal RSIS $\mathcal{S}^*$, where for computational purposes, the parametric set $\Theta$ as well as the safe operational domain $\mathcal{X}$ are taken to be polytopic, with $\mathcal{X}:=\{x\in\mathbb{R}^n~|~a_i^T.x\leq b_i,i\in1,\ldots,n_\mathcal{X}\}$. The strategy is to find a $(K^*,L^*)$, a corresponding $P^*\succ 0$, and the largest sublevel-set of $\mathcal{X}$ denoted 
$\mathcal{X}^*:=\{x~|~a_i^T.x\leq \delta^*.b_i,i\in1,\ldots,n_\mathcal{X};\delta^*\in(0,1]\}$ such that (\ref{lmi}) is feasible. Next, following Corollary \ref{roaex}, the largest hyper-ellipse $\mathcal{E}_{P^*,\sigma^*}$ contained in $\mathcal{X}^*$ is output as an inner-estimate of $\mathcal{S}^*$.

We begin with $L=\delta=0$, i.e., with a linear controller (since $L=0$) and the safety sublevel-set restricted to the origin (since $\delta=0$), over which the nonlinear dynamics is equivalent to the linear dynamics $(A_\theta,B_\theta)$ under the control of a nominal linear controller $\pi(x)=\pi_K(x)=K.x$. The initialization of Algorithm~\ref{algo1} requires computing a polytopic bound of $(A_\theta,B_\theta)$ for any $\theta\in\Theta$. 
 Let $\mathcal{I}$ denote the set of indices of $\theta$-dependent elements in $(A_\theta~B_\theta)$. %, and also let $\mathcal{P}$ denote its powerset. 
 Note $|\mathcal{I}|\leq n^2+m.n$. For each $\wp\subseteq\mathcal{I}$, let $(A_\wp~B_\wp)$ be obtained by replacing the $\theta$-dependent elements of $(A_\theta,B_\theta)$ corresponding to the indices in $\wp$ (resp. $\mathcal{I}\setminus \wp$) with their respective upper (resp. lower) bounds over $\Theta$. Then for any $\theta\in \Theta$, $(A_\theta~B_\theta)$ belongs to the polytope with $(A_\wp~B_\wp)$'s as the vertices, i.e.:
\begin{equation}\label{polytope}
\forall~\theta\in\Theta:\big[A_\theta~~B_\theta\big]=\sum_{\wp\subseteq\mathcal I}\gamma_{\wp} \big[A_{\wp}~~B_{\wp}\big],
\end{equation}
where $\gamma_{\wp}\in[0,1]$ such that $\sum_{\wp\subseteq\mathcal I}\gamma_{\wp}=1$.
To find the vertices $(A_{\wp}, B_{\wp})$'s, the bounds of its respective $\theta$-dependent elements can be computed via an SMT solver-based search (similar to that for the elements of $(\munderbar{\mathcal{L}},\bar{\mathcal{L}})$ in Section \ref{nnsb}).

\noindent\begin{minipage}{\linewidth}
\begin{algorithm}[H]{\color{black}
    \caption{Iterative local-optimal solution of (\ref{stage_1})}
    \label{algo1}
    \textbf{Input}: The dynamic model $f(\cdot,\cdot,\cdot)$ and its parametric set $\Theta$, the trade-off parameter $\mathfrak{w}\in\mathbb{R}_{\geq0}$, the maximum iterative steps $n_{steps}$, and the safe domain: $\mathcal{X}=\{x~|~a_i^T.x\leq b_i,i\in1,\ldots,n_\mathcal{X}\}$.
    
    \textbf{Initialize}: $k=1$, $\Delta=1/n_{steps}$, $\delta^0=L^0=0$, $P^0=Q^{-1}$, $K^0=Y.Q^{-1}$, where $Q\in\mathbb{R}^{n\times n}$ and $Y\in\mathbb{R}^{m\times n}$ are found as: 
    \begin{equation}\label{lmilin}
    \begin{split} 
    &Q,Y =~
    \underset{Q\succ0,Y\in\mathbb{R}^{m\times n}}{\mathbf{argmax}}\hspace{3 mm} \ln(\det(Q))
    \renewcommand{\thefootnote}{$\dagger$}
    \footnotemark
    \\ 
    \mathbf{s.t.}~
    &Q.{A_{\wp}}^T+A_{\wp}.Q+B_{\wp}.Y+Y^T.B_{\wp}^T\prec0\\
    &~~~~~~~~~~~~~~~~~~~~\forall~{\wp\subseteq\mathcal{I}},\\
    &\|Q.a_i\|_2\leq b_i~\forall~i\in1,\ldots,n_{\mathcal{X}},
    \end{split}
    \end{equation}
    where $(A_{\wp},B_{\wp})$'s are such that (\ref{polytope}) holds.
    \begin{algorithmic}[1]
        \While{$k~\leq~ n_{steps}$}
        \State $\delta^k = \delta^{k-1} + \Delta,~
            L^k = L^{k-1} + \mathfrak{w}.\Delta$
        \State $\mathcal{X}^k=\{x~|~a_i^T.x\leq \delta^k.b_i,i\in 1,\ldots,n_\mathcal{X}\}$
        \State Compute $M_{x\Lambda}$, $M_{\chi\Lambda}$, $N_{x\Lambda}$, $N_{\chi\Lambda}$ using (\ref{kdef}) corresponding to $(K^{k-1},L^k,\mathcal{X}^k,\Theta)$ and find $K^+$:
        \begin{equation}\label{klmi}
        \begin{split} 
        K^+ =~&
        \underset{K\in\mathbb{R}^{m\times n},\Lambda\geq0,\big\{\gamma_{ij}\geq0\big|\colvec[0.8]{i\in1,\ldots,m,\\
        j\in1,\ldots,n}\big\}}{\mathbf{argmin}}\hspace{3 mm} \|K-K^{k-1}\|_2\\ 
        \mathbf{s.t.:}~
        & \text{LMI in } (\ref{lmi})\text{ given }P=P^{k-1}
        \end{split}
        \end{equation}
        \If{(\ref{klmi}) is Feasible,}
        \State Update $M_{x\Lambda}$, $M_{\chi\Lambda}$, $N_{x\Lambda}$, $N_{\chi\Lambda}$ for $(K^{+},L^k,\mathcal{X}^k,\Theta)$ and find $P^+$:
        \begin{equation}\label{plmi}
        \begin{split} 
        P^+ =&
        \underset{P\succcurlyeq0,\Lambda\geq0,\big\{\gamma_{ij}\geq0\big|\colvec[0.8]{i\in1,\ldots,m,\\
        j\in1,\ldots,n}\big\}}{\mathbf{argmin}}\|P-P^{k-1}\|_2\\ 
        \mathbf{s.t.:}~
        & \text{LMI in } (\ref{lmi}) \text{ given }K=K^{+} \\
        \end{split}
        \end{equation}
        \EndIf
        \If{(\ref{klmi}) is Infeasible $\mathbf{or}$ (\ref{plmi}) is Infeasible,}
            \State Set $k=k-1$ and $\mathbf{break}$ 
        \Else
            \State $~K^k=K^+$, $P^k=P^+$, $k\leftarrow k+1$\;
        \EndIf
        \EndWhile
    \end{algorithmic}
    \textbf{Output}: {$K^*=K^{k}$, $L^*=L^{k}$, $P^*=P^{k}$, and:
        \begin{equation}\label{rstar}
        \begin{split} 
        \sigma^* =~&
        \underset{\sigma\in\mathbb{R}_{\geq0},x\in\mathcal{X}^{k}}{\mathbf{max}}\hspace{3 mm} \sigma\\ 
        \mathbf{s.t.:}~
        & x^T.P^*.x\leq \sigma.
        \end{split}
        \end{equation}}}
% \vspace*{-.1in}
\footnotetext{$^\dagger$In (\ref{lmilin}), $\ln(\cdot)$ denotes natural logarithm of its real scalar argument, and for a square matrix $M$, $\det(M)$ denotes its determinant.}      
\end{algorithm}
% \vspace*{-.2in}
\end{minipage}

The algorithm is initialized with a $(K,P)$ found by the convex search of (\ref{lmilin}) employing $(A_{\wp}~B_{\wp})$'s as parameters, which ensures that $\pi_K(x)$ is  $\NewT$-stabilizing for any linear system $(A_\theta,B_\theta)$, uniformly under any parametric variations \cite[pp. 100-102]{boyd1994linear}. Starting from $\delta=0$ as we increase $\delta\in[0,1)$ by a fixed amount $1/n_{steps}$ in each iteration (where $n_{steps}$ is a user-specified maximum number of iterative steps), the size of the sublevel-set of the safe domain increases. Due to the nonconvexity of (\ref{lmi}) when finding $(K,P)$ together, we split the search into two successive convex problems (\ref{klmi})-(\ref{plmi}) in each iteration. In (\ref{klmi}), holding $P$ fixed at its most recent value, we search for a $K$ in the neighborhood of its most recent iterate, subject to (\ref{lmi}), while keeping the matrices $M_{x\Lambda}$, $M_{\chi\Lambda}$, $N_{x\Lambda}$, and $N_{\chi\Lambda}$ unchanged, i.e., ignoring the effect on their value due to a change in $K$ over its past iterate. Next in (\ref{plmi}), those missing effects are restored when searching for a feasible $P$ in the neighborhood of its most recent iterate, while keeping $K$ fixed at its most recent value. 

In an iteration, if (\ref{klmi})-(\ref{plmi}) are both feasible, then the resultant $K,P$ satisfy (\ref{lmi}) for the current $L$, $\delta$-sublevel set of $\mathcal{X}$, a $\Lambda\geq0$, and $\gamma_{ij}\geq0$ for all $i\in1,\ldots,m$ and $j\in1,\ldots,n$.
On termination of the iterative loop, $K^*$, $L^*$, $P^*$, and $\sigma^*$ are reported as the output, where $\sigma^*$ is found by solving (\ref{rstar}). $\mathcal{E}_{P^*,\sigma^*}$ defines the largest sublevel set of the CLF $V(x)=x^T.P.^*x$ contained in $\mathcal{X}^*$. The objectives of (\ref{lmilin}) and (\ref{plmi}), and the conic constraints of (\ref{lmilin}) ensure that $P^*$ results in a locally maximal  $\mathcal{E}_{P^*,\sigma^*}$ \cite[p. 414]{boyd2004}.

\section{Optimal NN Control and Stability-guaranteed Training}\label{comp}
In this section, given the output of Algorithm \ref{algo1}, i.e., given a $(K^*,L^*)$ and a corresponding $\mathcal{E}_{P^*,\sigma^*}$ that is the largest hyper-elliptical inner-estimate  of the maximal RSIS $\mathcal{S}^*$, our goal is to solve (\ref{stage_2}) to find the NN-based ``perturbation controller"  $\pi_{\rho^*}\in\Pi_{L^*}$ such that the overall controller $\pi^*(x)=K^*.x+\pi_{\rho^*}(x)$ maximizes the expected long-run utility of the closed-loop system (\ref{clsys}) under parametric variations $\omega\sim\mathbb{P}(\NewT)$ and random initializations $x(0)\sim\mathbb{P}(\mathcal{E}_{P^*,\sigma^*})$.
Our gradient descent-based SGT to search for a locally optimal $\rho^*$, which extends the traditional ``actor-critic" RL \cite{sutton2018reinforcement, mnih2016asynchronous}, is presented in Algorithm~\ref{algo2}.  It should be noted that although Algorithm \ref{algo2} extends actor-critic RL, the approach proposed in this paper is general enough to be applied to any machine-learning-based deterministic controller design algorithm (e.g., imitation learning \cite{venkatraman2015improving, schaal2003computational}, deterministic policy gradient-based RL methods including the ``off-policy'' ones \cite{lillicrap2015continuous,8946888}, etc.).

Let $x(k)\in\mathbb{R}^n$ and $u_\rho(k)\in\mathbb{R}^m$ respectively, denote the state and the NN-based perturbation control values at the $k^{th}$ discrete sample instant under a uniform sampling period $\tau\in\mathbb{R}_{>0}$, available for training the NN $\pi_{\rho}$. Then the integral involved in defining the system's utility in (\ref{utility}) can be approximated by the corresponding discrete sum. Accordingly, the ``value'' of a state $x\in\mathcal{E}_{P^*,\sigma^*}$ employing an NN controller $\pi_{\rho}(\cdot)$, denoted $v_{\pi_{\rho}}(x)\in\mathbb{R}$, is \cite{sutton2018reinforcement}:
\begin{equation}\label{value}
\begin{split}
    v_{\pi_{\rho}}(x)&:= \mathop{\mathbb{E}}_{
    \begin{smallmatrix}
    \omega\sim\mathbb{P}(\NewT)\\
    \end{smallmatrix}}
    \bigg[\large\sum_{k=0}^{\infty} r(x(k),u(k))~\Big|\\
    & ~\pi=\pi_{K^*}+\pi_{\rho},\\
    &~x(k)=\psi_{\pi}(\omega^{k.\tau},x),~u(k)=\pi(x(k))\bigg].
    \end{split}
\end{equation}
Then the optimal NN controller $\pi_{\rho^*}(\cdot)$ is characterized by Bellman's optimality condition \cite{lewis2009reinforcement}:
\begin{equation}
\begin{split}
    v^*(x(k))=&r\big(x(k),\pi_{K^*}(x(k))+\pi_{\rho^*}(x(k))\big) + \\ &\mathop{\mathbb{E}}_{
    \begin{smallmatrix}
    \omega\sim\mathbb{P}(\NewT)\\
    \end{smallmatrix}} v^*(x(k+1)),
\end{split}
\end{equation}
where $v^*(x):=\max_{\rho} v_{\pi_{\rho}}(x)$. 

\noindent\begin{minipage}[htbp]{3.5in}
\begin{algorithm}[H]
    \caption{Actor-critic RL with stability guarantee}
    \label{algo2}
    \textbf{Input}: Actor and critic NNs parametrized by $\rho$ and $\phi$, sampling interval $\tau\in\mathbb{R}_{>0}$, training step sizes $\alpha_{\rho},\alpha_{\phi}\in\mathbb{R}_{>0}$, a diagonal matrix $\Sigma\in\mathbb{R}^{m\times m}$ s.t. $\Sigma\geq0$, decay rate of exploration $\nu_d\in(0,1)$ and its minimum value $\nu_{min}\in(0,1)$, no. of training trajectories $n_t$, integers $n_s$ and $n_a$ s.t. $n_s.\tau=T$ and $n_a<n_s$, trade-off parameter $\beta\in\mathbb{R}_{\geq0}$, and as introduced before $f(\cdot,\cdot,\cdot)$, $(K^*,L^*)$, $r(\cdot,\cdot)$, $\mathbb{P}(\mathcal{E}_{P^*,\sigma^*})$, and $\mathbb{P}(\NewT)$.

    \textbf{Initialize}: Exploration coefficient $\nu=1$, trajectory count $e=1$, initialize $\rho$, $\phi$ in their respective parameter spaces. 
    
    \begin{algorithmic}[1]
    \While{$e\leq n_t$}
      \State Set gradients $d\rho=0$, $d\phi =0$, sample index $k=0$; 
      \State 
        Randomly choose $x(0)\sim\mathcal{E}_{P^*,\sigma^*}$ and $\omega\sim\mathbb{P}(\NewT)$;
      \While{$k<n_s$}
          \State \noindent\begin{minipage}{0.86\linewidth}
          Given $x(k)$, $\omega(t)$ $\forall~ t\in[k.\tau,(k+1).\tau)$, apply random control $u_\rho(k)\sim\mathcal{N}(\pi_{\rho}(x(k)),$ $\Sigma)$ through a zero-order hold to observe $x(k+1)$, and compute reward $r(k):=r(x(k),\pi_{K^*}(x(k))+u_\rho(k))$;
          \end{minipage} 
          \If{$k\geq n_a$}
              \State \noindent\begin{minipage}{0.78\linewidth}
                Compute $n_a$-step advantage:\\ $a(k):=\sum_{l=0}^{N-1}r(k-l)+\hat{v}_\phi(x(k+1))-\hat{v}_{\phi}(x(k-n_a+1))$;
              \end{minipage} 
            \State \noindent\begin{minipage}{0.78\linewidth}
                $d\rho\leftarrow[(k-n_a)d\rho+\{(u(k)-\pi_{\rho}(x(k)))^T$
                
                $.\Sigma^{-1}\nabla_{\rho}
                                \pi_{\rho}(x(k))\}a(k)]/(k-n_a+1)$;
                \renewcommand{\thefootnote}{$\ddagger$}
                \footnotemark
            \end{minipage} 
             \State \noindent\begin{minipage}{0.78\linewidth}
                $d\phi\leftarrow[(k-n_a)d\phi+\nabla_{\phi}
                \{\hat{v}_\phi(x(k-n+1))-\hat{v}_\phi(x(k+1))\}.a(k)]/(k-n_a+1)$; \renewcommand{\thefootnote}{$\ddagger$}
                \footnotemark
             \end{minipage} 
             \State $k\leftarrow k+1$;
          \EndIf
      \EndWhile
      \State $\rho\leftarrow\rho+\alpha_{\rho}(d\rho-\beta.\nabla_{\rho}L_{\pi_{\rho}})$;
      \If{$L_{\pi_{\rho}}>L^*$}
          \State \noindent\begin{minipage}{0.85\linewidth}
            $\rho\leftarrow\rho\bigg(\frac{L^*}{L_{\pi_{\rho}}}\bigg)^{\frac{1}{n_l}}$;
          \Comment{$n_l=~$\# layers in $\pi_{\rho}(\cdot)$}
          \end{minipage} 
      \EndIf
      \State $\phi\leftarrow\phi-\alpha_{\phi}.d\phi$, $\nu\leftarrow\max(\nu_{min}, \nu.\nu_d)$,~$\Sigma\leftarrow \nu.\Sigma$;
      \State $e\leftarrow e+1$;
    \EndWhile
    \end{algorithmic}
    \textbf{Output}: Local optimal parameter $\rho^*=\rho$ for actor NN.
\end{algorithm}
\footnotetext{$^\ddagger$For a scalar differentiable function $f(x)$ of $x\in\mathbb{R}^n$, $\nabla_xf(x_0)\in\mathbb{R}^n$ denotes the gradient of $f$ w.r.t. $x$ at $x=x_0$}
\end{minipage}

As commonly practiced, in Algorithm~\ref{algo2} 
the value function is approximated by the ``critic" NN denoted $\hat{v}_\phi(\cdot)$, while the ``actor" NN $\pi_{\rho}(\cdot)$ serves as the controller. Both NNs are jointly trained over $n_t$ number of training trajectories, each comprising $n_s$ number of discrete time-steps. To enable effective exploration of the control space, at each training step, 
we choose $u_\rho(k)$ randomly from the Gaussian distribution $\mathcal{N}(\pi_{\rho}(x(k)),\Sigma)$ with mean $\pi_{\rho}(x(k))$ and covariance matrix $\Sigma\in\mathbb{R}^{m\times m}$. $\Sigma$ is initialized as a user-specified non-negative diagonal matrix, the elements of which are uniformly scaled down as the training proceeds. At the end of the training, the deterministic NN controller $\pi_{\rho^*}$ is deployed as the optimal perturbation controller. 

To improve training robustness, the $n_a$-step average of the computed gradients is used as the estimate of the true gradient in contrast to a single-step gradient estimate. To ensure $\NewT$-stability of the overall controller $\pi$, we constrain the search space of the NN controller $\pi_{\rho}$ within $\Pi_{L^*}$ by the following means: (i) we add to the policy gradient a regularizer (see Line 13 of Algorithm \ref{algo2}) proportional to the change in Lipschitz bound $L_{\pi_{\rho}}\in\mathbb{R}_{\geq0}$ of $\pi_{\rho}(\cdot)$, estimated using the computationally efficient method of \cite{gouk2021regularisation} (with $\beta\in\mathbb{R}_{\geq0}$ serving as a weight), and (ii) the elements of $\rho$ are uniformly scaled if the parameter update in a training step results in $L_{\pi_{\rho}}>L^*$ (see Line 14-15 of Algorithm \ref{algo2}). %Since $L_{\pi_{\rho}}$ needs to be estimated at each parameter update step, we choose the method of \cite{gouk2021regularisation} for this, which does not require solving any optimization, and hence, is computationally efficient. Alternatively, a more sophisticated method \cite{NEURIPS2019_95e1533e, doi:10.1137/19M1272780} may be chosen to obtain less conservative estimates of $L_{\pi_{\rho}}$ at the cost of added computation time.  

% Algorithm \ref{algo2} naturally extends to ``asynchronous'' parallelization to reduce the training time, using the method suggested in \cite{mnih2016asynchronous}.

\section{Illustrative Example}\label{exmpl_combined}
To validate the correctness and effectiveness of our proposed method, we consider the following illustrative nonlinear system of the form (\ref{clsys}) possessing continuously differentiable dynamics (here, the $i^{th}$ element of $x\in\mathbb{R}^n$ is denoted $x_i$):
\begin{equation}\label{exmpl}
    \begin{split}
    \dot{x} &= \begin{bmatrix}
    -(1+\omega_1)x_2\\
    x_1 + (1+\omega_2)(x_1^2-1)x_2
    \end{bmatrix}+u,\\
    u&=\pi(x),
    \end{split}
\end{equation}
where $\omega=[\omega_1~\omega_2]^T$ denotes the vector of time-varying parameters bounded within the range $\Theta\equiv [-0.05,0.05]\times[-0.1,0.1]$. If $\pi(0)=0$, regardless of $\omega$-value, the origin is an equilibrium of the above system. Let the reward function and the safe domain of the system be respectively given as: $r(x,u)=-(x^T.x+0.1u^T.u)$ and a polytope $\mathcal{X}\subset\mathbb{R}^n$ with vertices $(0.3, 0.6), (0.1962, 0.8077), (-0.3375, 0.1406)$, $(-0.3375, -0.8523)$, $(0.3, -0.2727)$ as shown in Fig. \ref{rsis_plot}.

Our objective is to find a $\NewT$-stabilizing $\pi^*(\cdot)$ and a corresponding maximal RSIS $\mathcal{S}_{\pi^*,\NewT}^{\mathcal{X}}\subset\mathcal{X}$ so that the expected long-run utility of (\ref{utility}) is maximized under random parametric variation in 
$\Theta\equiv [-0.05,0.05]\times[-0.1,0.1]$ and state initializations within $\mathcal{S}_{\pi^*,\NewT}^{\mathcal{X}}$. 
As proposed, $\pi^*(x)=K^*.x+\pi_{\rho^*}(x)$ with $\pi_{\rho^*}\in\Pi_{L^*}$, where 
$(K^*,L^*)$ and the associated hyper-elliptical inner-estimate of the maximal $\mathcal{S}_{\pi^*,\NewT}^{\mathcal{X}}$ are found employing Algorithm~\ref{algo1}, and $\pi_{\rho^*}\in\Pi_{L^*}$ is implemented using an NN, with its parameter $\rho^*$ trained using Algorithm~\ref{algo2}, maximizing the expected utility.

\subsection{Computation of $(K^*,L^*)$ and Inner-estimate of $\mathcal{S}_{\pi^*,\NewT}^{\mathcal{X}}$}Following Assumption \ref{ass:1}, by linearizing the dynamics of (\ref{exmpl}) at $x=u=0$, the matrices $A_\theta,B_\theta$ for a $\theta\in\Theta$ are derived as: 
\begin{equation}\label{nominal_linear}
    A_\theta = \begin{bmatrix}
    0 & -(1+\theta_1)\\
    1 & -(1+\theta_2)
    \end{bmatrix},~
    B_\theta =  \begin{bmatrix}
    1  & 0\\
    0 & 1 
    \end{bmatrix}.
\end{equation}
Since 2 of 8 elements of $(A_\theta,B_\theta)$ are $\theta$-dependent, $2^2=4$  $(A_{\wp},B_{\wp})$ vertices are computed such that (\ref{polytope}) holds. Using those as parameters, we first solve (\ref{lmilin}) and find a feasible pair:
\begin{equation*}
    K^0 = \begin{bmatrix}
    -2.8299 & 0.3352\\
   1.9226 & -0.9035
    \end{bmatrix},~
    P^0 = \begin{bmatrix}
    3.6841 & -0.5629\\
    -0.5629 & 1.7448
    \end{bmatrix},
\end{equation*}
that certifies the $\NewT$-stabilizability of (\ref{exmpl}) along with the existence of a neighborhood of the origin as its $\NewT$-RoA under the $\NewT$-stabilizing controller $\pi_{K^0}(x)=K^0.x$. 

Next we initialize Algorithm \ref{algo1} with $(K^0,P^0)$ and run it using $\mathfrak{w}=1.1$ and over $n_{steps}=20$ iterations  to search for $(K^*,L^*)$. At any iteration $k\leq n_{steps}$, the elements of matrices $\munderbar{\mathcal{L}}^k$ (resp., $\bar{\mathcal{L}}^k$) are conservatively computed to the accuracy of 0.001 via binary search employing the SMT solver dReal \cite{10.1007/978-3-642-38574-2_14}. The solutions of the convex problems (\ref{klmi})-(\ref{plmi}) certify the $\NewT$-stability of the nonlinear system (\ref{exmpl}) under the control of $\pi^k=\pi_{K^k}+\pi_{\rho}$ for any $\pi_{\rho}\in\Pi_{L^k}$ and any initialization within $\mathcal{E}_{P^k,\sigma^k}\subseteq\mathcal{X}^k\subseteq\mathcal{X}$ obtained by solving (\ref{rstar}). The iterative loop continues for $n_{steps}=20$ iterations, yielding $(K^*,L^*)$ and the corresponding $P^*$ as:
\begin{equation}\label{lstardstar}
   \hspace*{-.1in} K^* \!=\! \begin{bmatrix}
    -2.9714 \!\!&\!\! -0.1204\\
   1.5924 \!\!&\!\! -2.1744
    \end{bmatrix}\!,~L^*=1.1,~
    P^* \!=\! \begin{bmatrix}
    3.8426 \!\!\!&\!\!\! -0.2612\\
    -0.2612 \!\!&\!\! 1.5241
    \end{bmatrix}\!.
\end{equation}
Also, the level value $\sigma^*=0.3272$ for defining the hyper-ellipse $\mathcal{E}_{P^*,\sigma^*}$ is computed solving (\ref{rstar}). 

Recall $V(x) = x^T.P^*.[*]$ is a CLF over $\mathcal{E}_{P^*,\sigma^*}$ for the system (\ref{exmpl}) under any controller $\pi=\pi_{K^*}+\pi_{\rho}$ with $\pi_{\rho}\in\Pi_{L^*}$, which certifies the $\NewT$-stability of the system at the origin according to Theorem \ref{main_th}. Also, $\mathcal{E}_{P^*,\sigma^*}$ serves as a hyper-elliptical inner-estimate of $\mathcal{S}_{\pi,\NewT}^{\mathcal{X}}$ following Corollary \ref{roaex}. The computed $\mathcal{E}_{P^*,\sigma^*}\subset\mathcal{X}$ is shown in Fig. \ref{rsis_plot}. 
\begin{figure}[htbp]
\centering
\includegraphics[width=2.2in]{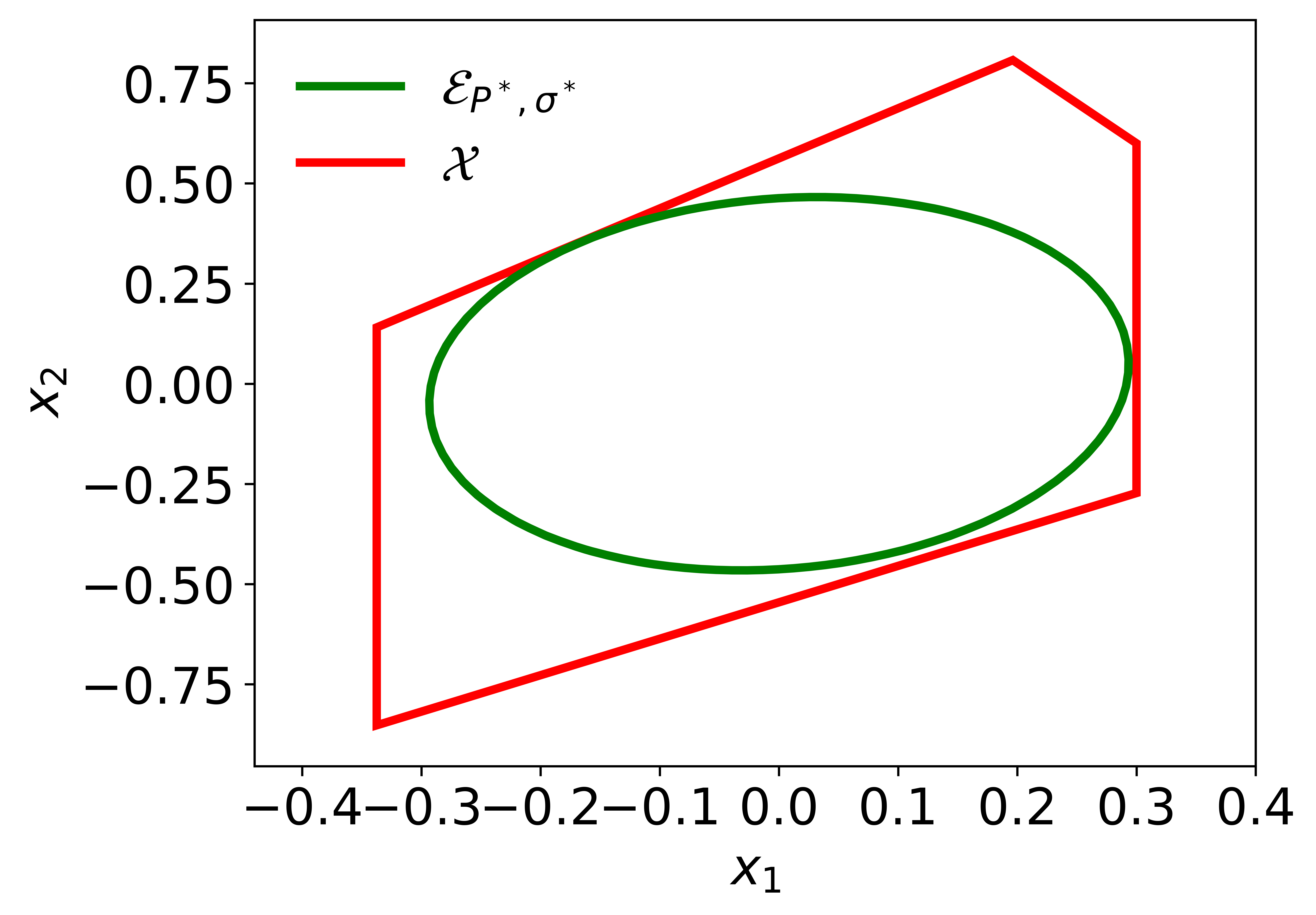}
\caption{Maximal elliptical RSIS inner-estimate $\mathcal{E}_{P^*,\sigma^*}\subset\mathcal{X}$}
\label{rsis_plot}
\end{figure}

To illustrate the principle underlying the algorithm, the evolution of the eigenvalues of $A_{0,K^k}$, i.e., the nominal system's state-matrix (see the representation of (\ref{cl_linear})), is shown in Fig. \ref{eigen_plot}. Clearly, as the permitted Lipschitz bound $L^k$ of the perturbation controller and the level $\delta^k$ of the safe domain increase over the successive iterations, $(K^k,P^k)$ get adjusted so that the eigenvalues are placed further away from the imaginary axis toward the left of the complex plane, thus securing higher ``margin of stability'' to allow larger NPV $\zeta_{K^k}$. 
\begin{figure}[htbp]
\centering
\includegraphics[width=2.0in]{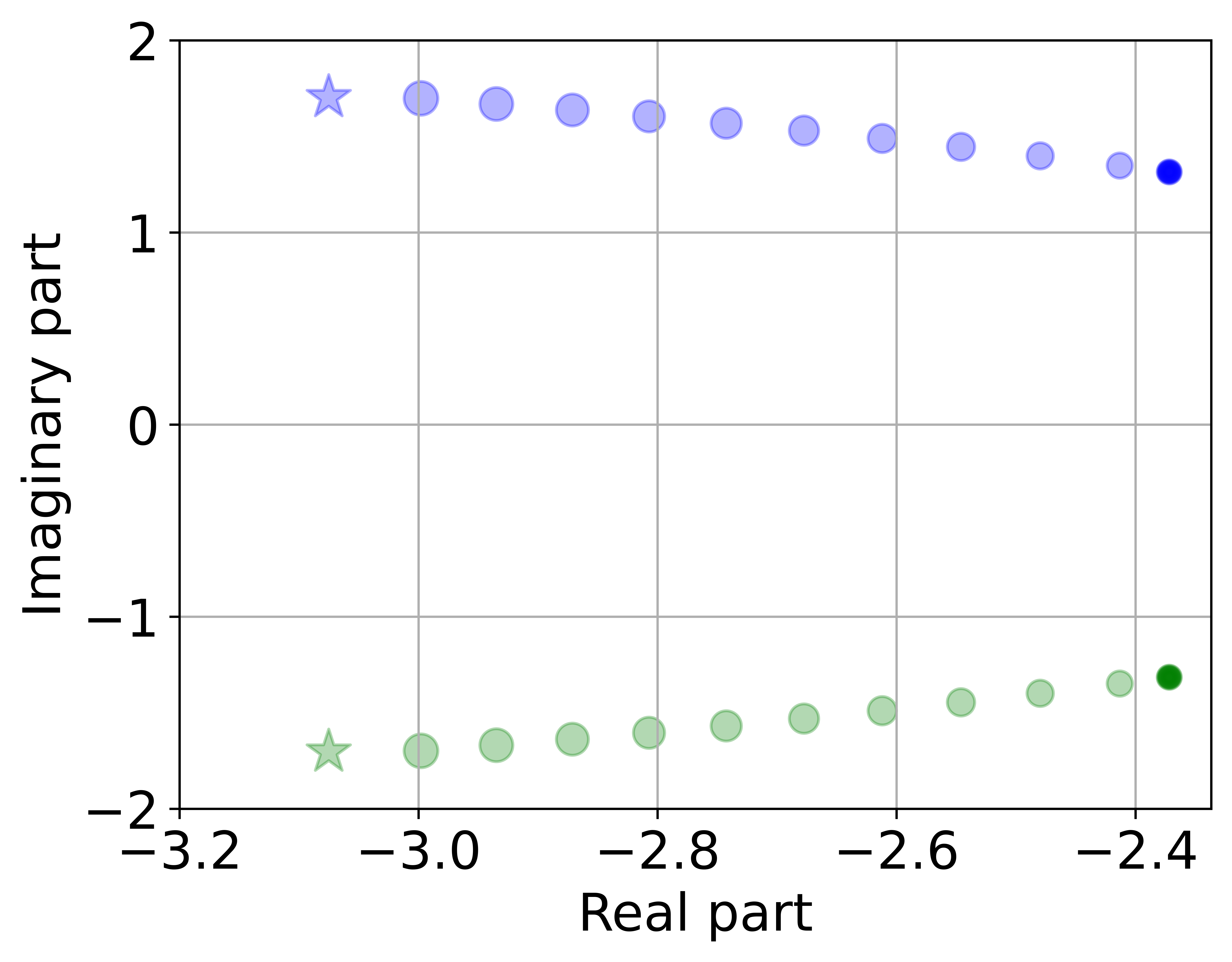}
\caption{Iterative progression of the eigenvalues of $A_{0,K^k}$. Bubbles of the two colors denote the respective two eigenvalues, whose sizes increase with the iterations; their final values are shown by the stars of the respective colors.}
\label{eigen_plot}
\end{figure}

This part of the algorithm is implemented in Python 3.7, and (\ref{lmilin})-(\ref{rstar}) are solved using CVXPY 1.2 with MOSEK 9.2.47 as the backend solver.  

\subsection{Computing NN Controller $\pi_{\rho^*}(\cdot)$}The NN controller is trained using Algorithm \ref{algo2}. Both the controller and value NNs, i.e., $\pi_{\rho}(\cdot):\mathbb{R}^2\rightarrow\mathbb{R}^2$ and $\hat{v}_{\phi}(\cdot):\mathbb{R}^2\rightarrow\mathbb{R}$ respectively, have two trainable layers. The hidden layer of each NN has 5 neurons, each with ``$tanh(\cdot)$'' activation, and the activation of the single neuron of the output layer is the identity function. The ``on-policy'' gradient decent for both the NNs are performed using Adam \cite{kingma2014adam} with step-size of $\alpha_{\rho}=\alpha_{\phi}=0.001$. The other parameters of the algorithm are set as follows: $\beta=10^{-15}$, $n_s=200$, $n_a=20$, $\tau=0.1$, $n_t=600$, $\nu_{d}=0.98$, $\nu_{min}=10^{-4}$, and each diagonal element in the diagonal covariance matrix $\Sigma$ is set to $(0.15)^2=0.0225$,. The trainable bias of the two layers for both $\pi_{\rho}(\cdot)$ and $\hat{v}_{\phi}(\cdot)$ are set to zero; this ensures $\pi_{\rho}(0)=0$. Upon termination of Algorithm \ref{algo2}, the trained weight matrices of the respective layers of the optimal NN controller $\pi_{\rho^*}(\cdot)$ are found to be:
\begin{equation*}
\begin{split}
    W_1=&\begin{bmatrix}
    -0.0503&-0.4911&0.4001&-0.2690&0.0077\\
    -0.3338&-0.2768&0.0496&0.3172&-0.1867
    \end{bmatrix}^T,\\
    W_2=&\begin{bmatrix}
    0.0119&0.0393&-0.3223&-0.2757&-0.1733\\
    0.1496&0.2292&0.1309&0.2942&0.2662
    \end{bmatrix}.
\end{split}
\end{equation*}
The Lipschitz bound of $\pi_{\rho^*}(\cdot)$ computed using the method proposed in \cite{gouk2021regularisation} is: $L_{\pi_{\rho^*}}=0.8218$, which is well below the value $L^*=1.1$ in (\ref{lstardstar}). Hence, by Theorem \ref{main_th}, the controller $\pi^*=\pi_{K^*}+\pi_{\rho^*}$ is $\NewT$-stabilizing for system (\ref{exmpl}) with $\mathcal{E}_{P^*,\sigma^*}$ as an inner-estimate of the maximal RSIS. 

This part of the algorithm is implemented in Python 3.7, and the architecture and backpropagation of the controller and value NNs are implemented using Tensorflow 2.3.

\subsection{Performance Evaluation of Trained Controller}

An instance of transient performance of the trained NN-based controller $\pi^*(x)$ is depicted in Fig. \ref{system_response}, where the parameters of the system are held fixed at $\omega_1\equiv\theta_1=-0.0253$, $\omega_2\equiv\theta_2=0.0532$, and the system is initialized at $x_1(0)=0.2752$, $x_2(0)=0.1866$. The response of the system under the above computed controller $u = \pi^*(x)$ is plotted. 
\begin{figure}[htbp]
\centering
\includegraphics[width=2.0in]{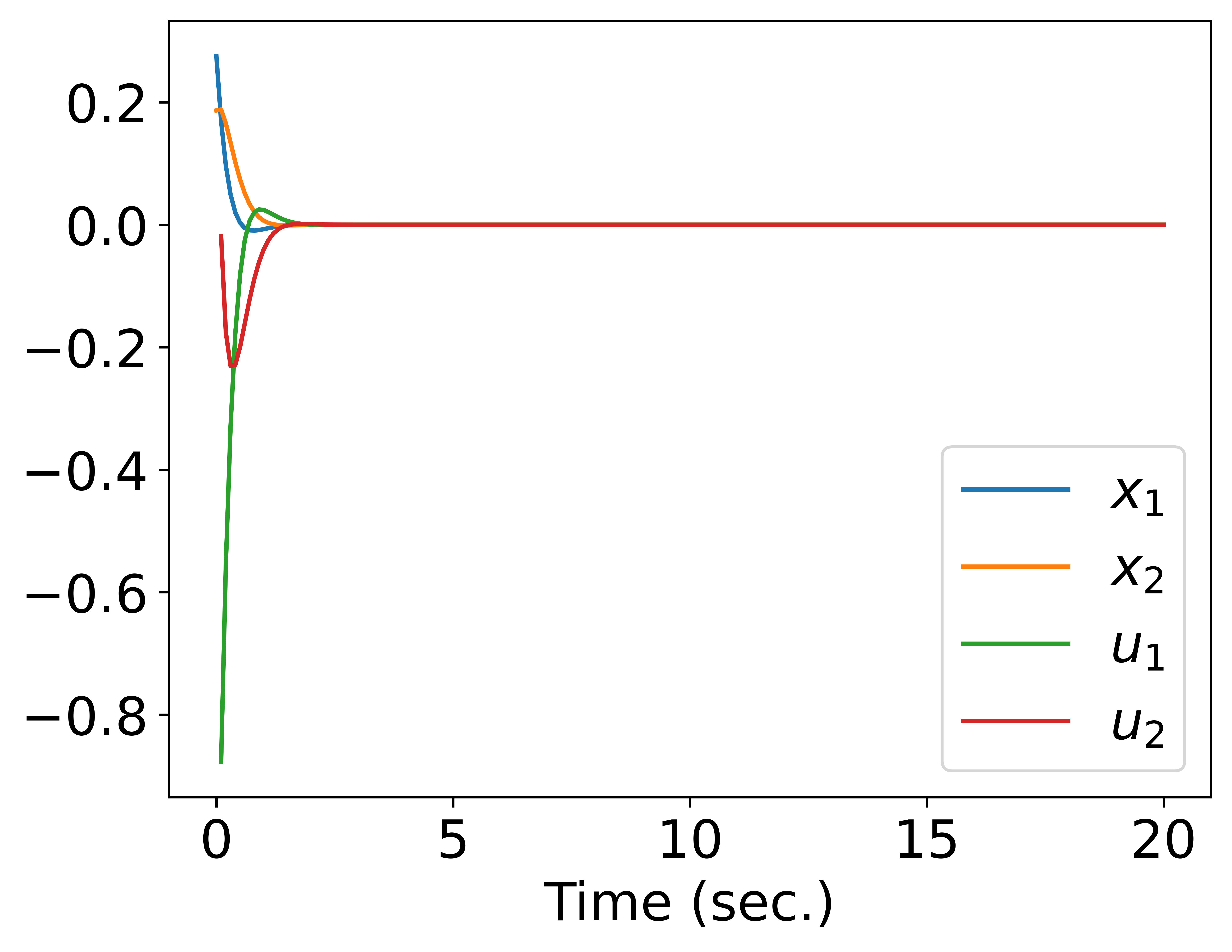}
\caption{System's transient response under $\pi^*(x)$, where the parameter value is: $\theta=[-0.0253,0.0532]^T$, and the initialization is at: $x(0)=[0.2752,0.1866]^T$}
\label{system_response}
\end{figure}

For a comparative validation of the performance of the proposed controller, we pick as benchmark the \emph{linear quadratic regulator} (LQR) designed for the linear nominal system. We compute the LQR gain for $(A_0,B_0)$ and the given reward function solving the algebraic Ricatti equation using MATLAB R2020b:
\begin{equation*}
    K_{LQR}=\begin{bmatrix}
    - 0.8350  & 0.1414\\
    0.1414 &   -0.5043
    \end{bmatrix}.
\end{equation*}
and set $K=K_{LQR}$, $u_\rho=0$ in the equivalent representation of (\ref{cl_linear}). While LQR can guarantee optimality and stability for the linear nominal dynamics whenever that is stabilizable and gets to be widely used even for the nonlinear systems, obtained against their local linearized models \cite{okyere2019lqr,chrif2014aircraft,10.1007/978-3-319-11933-5_48}; yet, in general, an estimate for the corresponding RoA is not available in the presence of plant nonlinearity and/or parametric variation. Additionally, LQR cannot guarantee the boundedness of the system's trajectory within $\mathcal{X}$ either.

 Next, we simulated 40 trajectories of the system's response, each with 200 discrete time steps at a sampling interval of $\tau=0.1$ sec., where $\{\omega(k.\tau)~|~k\in0,\ldots,n_s\}$ and $x(0)$ for each simulation were chosen uniformly randomly from their respective domains: $\Theta^{n_s+1}$ and $\mathcal{E}_{P^*,\sigma^*}$. For each selected $\{\omega(k.\tau)\}$ and $x(0)$, the system responses under LQR and also under the controller $\pi^*(\cdot)$ were simulated, and their utilities were computed using (\ref{utility}).   
The statistics of the utilities over these 40 simulations are shown in Fig. \ref{stat200}, where the median value of the utility slightly improved by 4.92\% under $\pi^*(\cdot)$ compared to that under LQR. Also, using our approach, the RSIS $\mathcal{E}_{P^*,\sigma^*}$ could also be computed as shown in Fig.~\ref{rsis_plot}, but that is not known for a typical LQR. In addition, note LQR computation is feasible only when $r(\cdot,\cdot)$ is quadratic, as chosen in this example, whereas our Algorithm \ref{algo2} does not have such restriction. 
\begin{figure}
    \centering
    \includegraphics[width=1.4in,trim={6.2cm 8.5cm 5.5cm 9cm}]{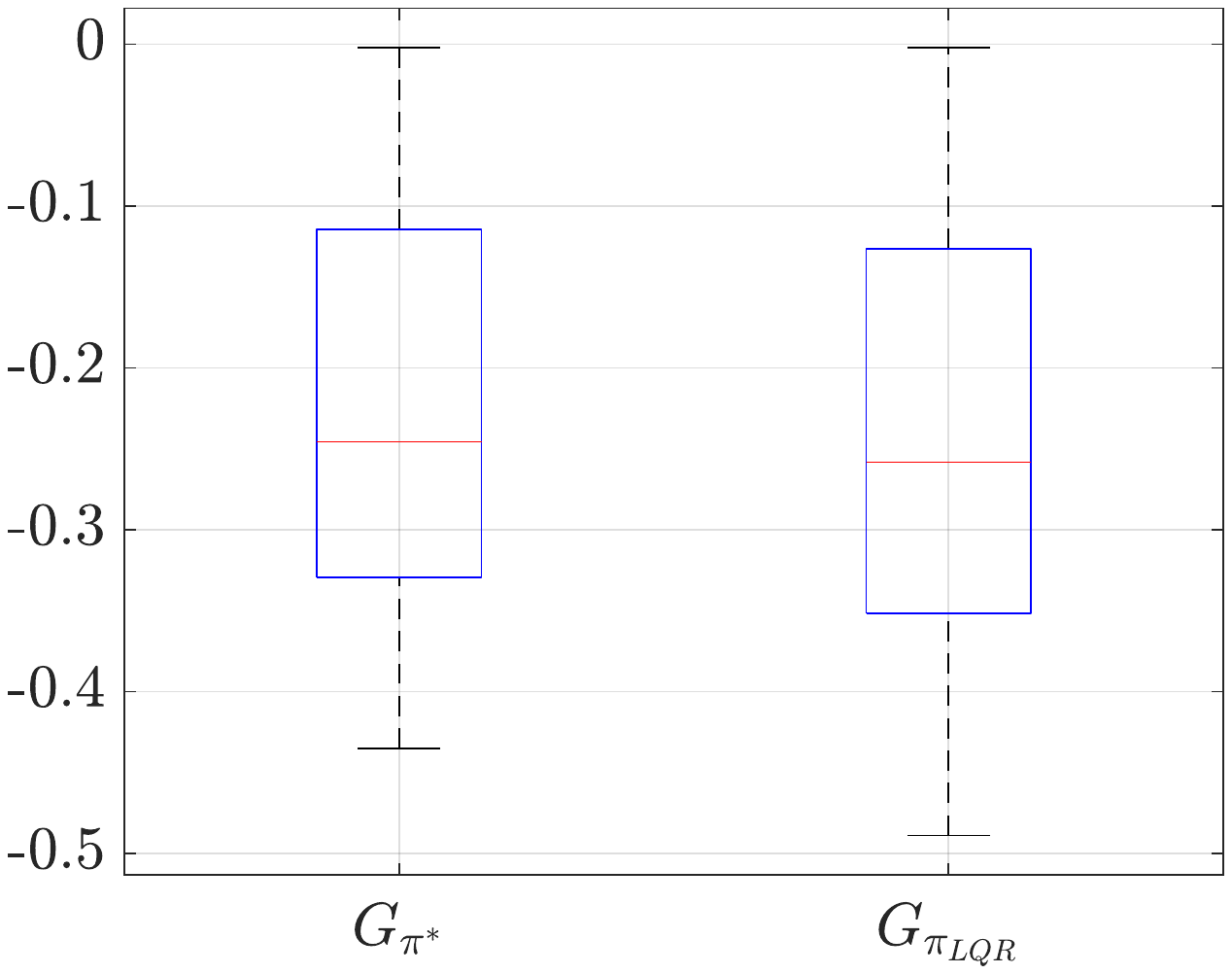}
    \caption{Box-whisker plots of $G_u$, for 40 simulations with $\omega(k.\tau)$, $x(0)$ chosen randomly, under $u=\pi_{LQR}(x):=K_{LQR}.x$ and under $u=\pi^*(x)$} respectively
    \label{stat200}
\end{figure}
\section{Conclusion}\label{conclusion}
The presented framework provides a way to design and certify NN (neural network) controllers for nonlinear systems subject to parameter variations for safety, stability, and robustness. Its a first framework for designing safe, stabilizing, and robust NN-based state-feedback controller for nonlinear continuous-time systems, where the dynamic model is known but is subject to unknown parametric variation over a given bounded set. A stability certificate is introduced extending the existing Lyapunov-based  results, and is further used to compute a maximal Lipschitz bound for a stabilizing NN-based controller, together with a corresponding maximal region-of-attraction contained in a user given safe operating domain, starting from where the asymptotic closed-loop stability of the system is guaranteed regardless of arbitrary parametric variation, and at the same time the state trajectory remains confined to the safe domain. A stability-guaranteed training algorithm is also presented to design such a safe  and robustly stabilizing NN controller that also maximizes the system's expected long-run utility, with respect to random initializations and parametric variations. The illustrative example validates the correctness of the proposed theory and the effectiveness of the proposed algorithms.  Future work can generalize the proposed framework for the case of partial observability and drifting equilibria under parametric variation. 
\bibliographystyle{IEEEtran}
\bibliography{IEEEabrv,References}

\section{Appendix}
\subsection{Proof of Proposition \ref{qcpi}}\label{qcmatproof}
\begin{proof}
It follows from (\ref{lipdef}) that a controller $\pi_\rho\in\Pi_L$ satisfies the following $\forall~x_1, x_2\in\mathbb{R}^n$:
\begin{equation}
\begin{split}
      \left\|\pi_\rho(x_1)-\pi_\rho(x_2)\right\|_{\infty} & \leq L\left\|x_1-x_2\right\|_{\infty}\\
      &\leq L\sum_{j=1}^n \left|x^j_1- x^j_2\right|.
\end{split}
\end{equation}
From the above, it further follows that there exists a set of functions: $\delta_{ij}:\mathbb{R}^n\times\mathbb{R}^n\rightarrow[-L,L]$  $\forall~i\in\{1,...,m\}$, $\forall~j\in\{1,...,n\}$ such that $\forall~x_1,x_2\in\mathbb{R}^n$:
\begin{equation}\label{qcmatproof1}
    \pi_\rho(x_1)-\pi_\rho(x_2)=
    \begin{bmatrix}
    \sum_{j=1}^n\delta_{1j}(x_1,x_2).(x^j_1-x^j_2)\\
    \vdots\\
    \sum_{j=1}^n\delta_{mj}(x_1,x_2).(x^j_1-x^j_2)
    \end{bmatrix}.
\end{equation}
Also, since $\pi_\rho(0) = 0$, we get the following by setting $x_1=x$ and $x_2=0$ in (\ref{qcmatproof1}):
\begin{equation}
    \pi_\rho(x)=
    \begin{bmatrix}
    \sum_{j=1}^n\delta_{1j}(x,0).x^j\\
    \vdots\\
    \sum_{j=1}^n\delta_{mj}(x,0).x^j
    \end{bmatrix}=[\mathbf{I}_m\odot\mathbf{1}_{1\times n}].\chi(x),                   
\end{equation}
where for $k:=i+(j-1)m\in\{1,\ldots,mn\}$, 
% for $i\in\{1,\ldots,m\}$ and $j\in\{1,\ldots, n\}$, 
the $k^{th}$
% $k^{th}=(i+(j-1)m)^{th}$
element of $\chi(x)$ is defined as $\chi^k\equiv \chi^{i+(j-1)m}:=\delta_{ij}(x,0).x^j$. Note this implies $\chi(0)=0$, also since $\delta_{ij}(x,0)^2\leq L^2$, we get:
\begin{equation}
    \begin{split}
      &(\chi^{i+(j-1)m})^2\leq L^2(x^j)^2\\
      \Rightarrow & \sum_{i,j}\gamma_{i.j}L^2(x^j)^2-\sum_{i,j}\gamma_{i.j}(\chi^{i+(j-1)m})^2\geq 0~\forall~\gamma_{i.j}\geq 0\\
      \Rightarrow &
    \begin{bmatrix}
        x\\
        \chi
    \end{bmatrix}^T
    \begin{bmatrix}
        L^2 diag(\Gamma_j) & \mathbf{0}_{n\times mn}\\
        * & diag(\{-\gamma_{i.j}\})
    \end{bmatrix}
    \begin{bmatrix}
    *
    \end{bmatrix}
    \geq0,
    \end{split}
    \nonumber
\end{equation}
\end{proof}
\subsection{Proof of Proposition \ref{equi}}\label{smoothnessproof}
\begin{proof}
To simplify notation, let us denote the space $\mathcal{X}\times\mathcal{U}_{L,\mathcal{X}}\subset\mathbb{R}^{n+m}$ by $\mathcal{Z}$, where $\mathcal{U}_{L,\mathcal{X}}$ is the $L$-bounded control subspace of a controller $\pi_\rho\in\Pi_L$ over $\mathcal{X}$. Accordingly, $(x,u_\rho)\in\mathcal{X}\times\mathcal{U}_{L,\mathcal{X}}$ is equivalently written as $z\in\mathcal{Z}$, where $z:=[x^T~~u_\rho^T]^T$. Also, the NPV $\zeta_K(x,u_\rho,\theta)$ is simply denoted $\zeta_K(z,\theta)$. Then $\forall~i\in\{1,\ldots,n\}$, $\forall~z_1,z_2\in\mathcal{Z}$, and for each $\theta\in\Theta$:
\begin{equation}\label{etaqc}
    \zeta_K^i(z_1,\theta) - \zeta_K^i(z_2,\theta)=\sum_{j=1}^{n+m}\Big\{\zeta_K^i(z_{2,j},\theta)-\zeta_K^i(z_{2,j-1},\theta)\Big\},
\end{equation}
where $z_{2,0}:=z_2$, and for $j>0$, the $k^{th}$ element of $z_{2,j}$ is:
\begin{equation}\label{splitup}
z^k_{2,j}:=
\begin{cases}
    z^k_1, & k\leq j\\
    z^k_2, & k > j
\end{cases}.
\end{equation}
Note that in the $j^{th}$ term of the summation in (\ref{etaqc}), the vectors $z_{2,j}$, $z_{2,j-1}\in\mathbb{R}^{m+n}$ are componentwise identical except for their $j^{th}$ component. This implies: $z_{2,j}-z_{2,j-1}=(z_1^j-z_2^j).1_j$, where $1_j\in\mathbb{R}^{m+n}$ is a binary vector with only the $j^{th}$ entry 1 and other entries zero. Since $\zeta_K(\cdot,\cdot,\cdot)$ is locally component-wise $(\munderbar{\mathcal{L}},\bar{\mathcal{L}})$-sector bounded over $\mathcal{X}$, we have from (\ref{secquad1}) that $\forall~i\in\{1,\ldots,n\}$ and $\forall~j\in\{1,\ldots,n+m\}$:
\begin{equation}
     \munderbar{\mathcal{L}}^{i,j}\leq J^{i,j}_{\zeta_K(z,\theta),z}\Big|_{\colvec[0.7]{z=\hat{z}\\\theta=\hat{\theta}}}\leq\bar{\mathcal{L}}^{i,j}, ~\forall~\hat{z}\in\mathcal{Z},~\forall~\hat{\theta}\in\Theta.
\end{equation}
It then follows that $\forall~i\in\{1,\ldots,n\}$, $\forall~j\in\{1,\ldots,n+m\}$, $\forall~\theta\in\Theta$, and for $z_{2,j},z_{2,j-1}$ as defined in (\ref{etaqc}):
\begin{equation}\label{eq:etabound}
     \munderbar{\mathcal{L}}^{i,j}(z_1^j-z_2^j)\leq \zeta_K^i(z_{2,j},\theta)-\zeta_K^i(z_{2,j-1},\theta)\leq\bar{\mathcal{L}}^{i,j}(z_1^j-z_2^j).
\end{equation}
Combining (\ref{etaqc}) and (\ref{eq:etabound}), we obtain $\forall~\theta\in\Theta$:
\begin{equation}\label{eq:etabound2}
     \sum_{j=1}^{n+m}\munderbar{\mathcal{L}}^{i,j}(z_1^j-z_2^j)\leq \zeta_K^i(z_1,\theta)-\zeta_K^i(z_2,\theta)\leq\sum_{j=1}^{n+m}\bar{\mathcal{L}}^{i,j}(z_1^j-z_2^j).
\end{equation}
This implies that for each $\theta\in\Theta$, there exists a set of functions: $\delta^{ij}_\theta:\mathcal{Z}\times\mathcal{Z}\rightarrow\big[\munderbar{\mathcal{L}}^{i,j},\bar{\mathcal{L}}^{i,j}\big]$ $\forall~i\in\{1,...,n\}$, $\forall~j\in\{1,...,n+m\}$ such that $\forall~z_1,z_2\in\mathcal{Z}$:
\begin{equation}\label{qcmatproof2}
    \zeta_K(z_1,\theta)-\zeta_K(z_2,\theta)=
    \begin{bmatrix}
    \sum_{j=1}^{n+m}\delta^{1j}_\theta(z_{1},z_{2}).(z^j_{1}-z^j_{2})\\
    \vdots\\
    \sum_{j=1}^{n+m}\delta^{nj}_\theta(z_{1},z_{2}).(z^j_{1}-z^j_{2})
    \end{bmatrix}.
\end{equation}
Also, since $\zeta_K(0,\theta) = 0$ $\forall~\theta\in\Theta$, we get the following for each $\theta\in\Theta$ by setting $z_1=z$ and $z_2=0$ in (\ref{qcmatproof2}):
\begin{equation}
    \zeta_K(z,\theta)=
    \begin{bmatrix}
    \sum_{j=1}^{n+m}\delta^{1j}_\theta(z,0).z^j\\
    \vdots\\
    \sum_{j=1}^{n+m}\delta^{nj}_\theta(z,0).z^j
    \end{bmatrix}=[\mathbf{I}_n\odot\mathbf{1}_{1\times (n+m)}].\xi_\theta(z),
\end{equation}
where for 
$k:=i+(j-1)n\in\{1,\ldots,n(m+n)\}$, the $k^{th}$
element of $\xi_\theta(z)$ is defined as, $\xi_\theta^k\equiv \xi_\theta^{i+(j-1)n}:=\delta^{ij}_\theta(z,0).z^j$. %Note the sequence of $k$, i.e. $\big(1,\ldots,n(m+n)\big)$, is formed by iterating over $i\in(1,\ldots,n)$ for each $j\in(1,\ldots,n+m)$.

From the definition of $\bar{c}_{i,j}$ and $c_{i,j}$, it follows that $\forall~i\in\{1,\ldots,n\}$ and $\forall~j\in\{1,\ldots,n+m\}$:
\begin{equation}\label{deltac}
    \begin{split}
        & |\bar{c}_{ij}|\geq|\delta^{ij}_\theta(z,0)-c_{ij}|\\
        \Leftrightarrow~& \bar{c}_{ij}^2(z^j)^2\geq\big(\delta^{ij}_\theta(z,0)z^j-c_{ij}z^j\big)^2\\
         \Leftrightarrow~& (\bar{c}_{ij}^2-{c}_{ij}^2).(z^j)^2+2c_{ij}.z^j.\xi_\theta^{i+(j-1)n}
             - \big(\xi_\theta^{i+(j-1)n}\big)^2\geq0
    \end{split}
\end{equation}
(\ref{deltac}) further implies that $\forall~\Lambda>0,k_{ij}:=i+(j-1)n$:
\begin{equation}\label{deltac1}
    \begin{split}
    \sum_{
    \begin{smallmatrix*}
    i\in\{1,\ldots,n\}\\
    j\in\{1,\ldots,n+m\}
    \end{smallmatrix*}}&\Lambda^{k_{ij}}\Big\{(\bar{c}_{ij}^2-{c}_{ij}^2).(z^j)^2~+\\
            &\mspace{15mu} 2c_{ij}.z^j.\xi_\theta^{k_{ij}} - (\xi_\theta^{k_{ij}})^2\Big\}\geq0.
        \end{split}
\end{equation}
Next, we get the following for each $\theta\in\Theta$, by writing (\ref{deltac1}) in matrix form, splitting variable $z$ into $x$ and $u_\rho$, and recognizing that $u_\rho=\pi_\rho(x)=Q.\chi(x)$ with $\pi_\rho(\cdot)\in\Pi_L$, for which $x\in\mathcal{X}\Rightarrow u_\rho\in\mathcal{U}_{L,\mathcal{X}}$:
\begin{equation}
    \begin{bmatrix}
        x\\
        \chi\\
        \xi_\theta
    \end{bmatrix}^T
    \begin{bmatrix}
     M_{x\Lambda} & \mathbf{0}_{n\times m.n} & N_{x\Lambda}\\
     * &  M_{\chi\Lambda} & N_{\chi\Lambda}\\
        
        * & * & M_{\xi\Lambda}
    \end{bmatrix}
    \begin{bmatrix}
    *
    \end{bmatrix}
    \geq0,~\forall~x\in\mathcal{X}, 
\end{equation}
where $M_{x\Lambda}$, $M_{q\Lambda}$, $M_{\xi\Lambda}$, $N_{x\Lambda}$, and $N_{q\Lambda}$ are as defined in (\ref{kdef}). 
\end{proof}
\noindent We note that the above proof is partially inspired from the proof of Lemma 4.2 of \cite{8618996}.

\subsection{Proof of Theorem \ref{main_th}}\label{main_th_proof}
\begin{proof} 
In the given setting, i.e., given $L\in\mathbb{R}_{\geq0}$, $\mathcal{X}\subset\mathbb{R}^n$, and the system (\ref{clsys}) under control of $\pi(x)=\pi_K(x)+\pi_\rho(x)$ satisfying Assumption \ref{ass:1}, assume that there exist $K\in\mathbb{R}^{m\times n}$, $P\succcurlyeq 0$, $\Lambda\geq0$, and $\gamma_{i,j}\geq0$ for all $i\in1,\ldots,m$, $j\in1,\ldots,n$ satisfying (\ref{lmi}), or equivalently, except at the origin the following holds:
\begin{equation}\label{lmi2}
    \begin{bmatrix}
        x\\
        \chi\\
        \xi_\theta
    \end{bmatrix}^T
    \begin{bmatrix}
           V_{L,\{\Gamma_{j},P,K\}} & * & *\\
            \mathbf{0}_{m.n\times n} & M_{\chi\Lambda} & *\\
            N_{x\Lambda}^T+R^T.P & N_{\chi\Lambda}^T & M_{\xi\Lambda}
        \end{bmatrix}
    \begin{bmatrix}
    *
    \end{bmatrix}
    <0.
\end{equation}
Also, owing to the local $(\munderbar{\mathcal{L}},\bar{\mathcal{L}})$-sector bound of the NPV of the equivalent system (\ref{cl_linear}), we can combine (\ref{cl_linear}) and Proposition \ref{equi} to get the following under a controller $u_\rho=\pi_\rho(x)$, uniformly $\forall~x\in\mathcal{X},\theta\in\Theta,\pi_\rho(\cdot)\in\Pi_L$:
\begin{equation}
    \dot{x}=f(x,u_\rho,\theta)\equiv f_\theta(x,u_\rho)=A_{0,K}.x+\underbrace{R.\xi_\theta(x)}_{=\zeta_K(x,u_\rho,\theta)}.
\end{equation}
Accordingly, by algebraic manipulation it follows that in the given setting, (\ref{lmi2}) is equivalent to the following, uniformly $\forall~x\in\mathcal{X},\theta\in\Theta,\pi_\rho(\cdot)\in\Pi_L$:
\begin{equation}\label{lmix}
    \begin{split}
        &\bigg\{\!x^TPf_\theta+f_\theta^TPx \!\bigg\} + \Bigg\{\!\!\begin{bmatrix}
        x\\
        \chi\\
        \xi_\theta
    \end{bmatrix}^T
    \!\begin{bmatrix}
        M_{x\Lambda} & \mathbf{0}_{n\times m.n} & N_{x\Lambda}\\
     * &  M_{\chi\Lambda} & N_{\chi\Lambda}\\
        * & * & M_{\xi\Lambda}
    \end{bmatrix}
    \!\begin{bmatrix}
    *
    \end{bmatrix}\!\!\Bigg\} \\
        & + \bigg\{\begin{bmatrix}
        x\\
        \chi
    \end{bmatrix}^T
    \begin{bmatrix}
        L^2 diag(\{\Lambda_j\}) & \mathbf{0}_{n\times mn}\\
        * & diag(\{-\lambda_{i.j}\})
    \end{bmatrix}
    \begin{bmatrix}
    *
    \end{bmatrix}
    \bigg\}<0.
    \end{split}
\end{equation}

From Proposition \ref{equi}, the local $(\munderbar{\mathcal{L}},\bar{\mathcal{L}})$-sector bound of the NPV of system (\ref{cl_linear}) also implies that uniformly $\forall~x\in\mathcal{X},\theta\in\Theta,\pi_\rho(\cdot)\in\Pi_L$, we have the second term of (\ref{lmix}) nonnegative. Moreover from Proposition \ref{qcpi}, $\pi_\rho(\cdot)\in\Pi_L$ implies that the third term is nonnegative.  Hence, in the given setting, uniformly $\forall~\theta\in\Theta,\pi_\rho(\cdot)\in\Pi_L$, (\ref{lmix}) is equivalent to:
\begin{equation}\label{fin}
    \begin{split}
        & x^T.P.f_\theta+f_\theta^T.P.x<0,~\forall~x\in\mathcal{X}\setminus\{0\}\\
        \Leftrightarrow~& \dot{V}(x)<0,~\forall~x\in\mathcal{X}\setminus\{0\},
    \end{split}
\end{equation}
where $V(x)=x^T.P.x$. It can be seen that $V(\cdot)$ is continuously differentiable and satisfies the conditions in (\ref{lapth}) over $\mathcal{X}$, regardless of how $\theta$ evolves over time. Hence, $V(x)$ is a CLF for (\ref{cl_linear}), and equivalently, also for system (\ref{clsys}) under controller $\pi(x)=\pi_K(x)+\pi_\rho(x)$, which implies that in the given setting, system (\ref{clsys}) is $\NewT$-stable, uniformly for $\pi_\rho(\cdot)\in\Pi_L$. 
\end{proof}
If the value of either of $K$ and $P$ is given, then note (\ref{lmix}) serves as a variant of ``S-procedure'' \cite[pp. 23-24]{boyd1994linear} used in various control applications to formulate conservative LMI relaxations for solving sets of indefinite QCs  \cite{8618996,9424176,9388885}.  
\subsection{Proof of Corollary \ref{roaex}}\label{roaex_proof}
\begin{proof}
Since the safe domain $\mathcal{X}\subset\mathbb{R}^n$ contains the origin, and also since $\mathcal{X}\subset\mathbb{R}^n$ is a neighborhood of the origin, there exists a $\sigma\in\mathbb{R}_{>0}$ s.t. the set $\mathcal{E}_{P,\sigma}$, which is a hyper-ellipse since $P\succ0$, is contained within both $\mathcal{X}$ and $\mathcal{X}$. 

In the given setting, i.e., given $L\in\mathbb{R}_{\geq0}$, $\mathcal{X}\subset\mathbb{R}^n$, and the system (\ref{clsys}) under control of $\pi(x)=\pi_K(x)+\pi_\rho(x)$ satisfying Assumption \ref{ass:1}, say $P$ satisfies (\ref{lmi}) for a certain $K\in\mathbb{R}^{m\times n}$. Then, following Theorem \ref{main_th}, since $V(x)=x^T.P.x$ is a CLF of system (\ref{clsys}) locally over $\mathcal{X}$, uniformly for any $\pi_\rho\in\Pi_L$, we have $\dot{V}(x)<0,~\forall~x\in\mathcal{X}\setminus\{0\}$. Also since $V(x)=\sigma$ uniformly over the boundary of $\mathcal{E}_{P,\sigma}\subset\mathcal{X}$, $\mathcal{E}_{P,\sigma}$ is an invariant set, i.e.,  
\begin{equation}
    x \in \mathcal{E}_{P,\sigma} \Rightarrow 
        \psi_{\pi}(\omega^t,x)\in\mathcal{E}_{P,\sigma},~\forall~t\in\mathbb{R}_{\geq0}.
\end{equation}
Hence, uniformly for each $x\in\mathcal{E}_{P,\sigma}\setminus\{0\}$, we have:
\begin{equation}
    \left\|P^{\frac{1}{2}}.\psi_{\pi}(\omega^{t'},x)\right\|_2 < \left\|P^{\frac{1}{2}}.\psi_{\pi}(\omega^t,x)\right\|_2,~\forall~t'>t.
\end{equation}
In other words, $\psi_{\pi}(\omega^{t},x)$ quadratically converges to the origin as $t\rightarrow\infty$ $\forall~\omega\in\NewT$ if the system is initialized within $\mathcal{E}_{P,\sigma}$. Hence, $\mathcal{E}_{P,\sigma}$ is a $\NewT$-RoA of the system (\ref{clsys}) under a controller $\pi(x)=\pi_K(x)+\pi_\rho(x)$, uniformly for any $\pi_\rho\in\Pi_L$. Using $\mathcal{E}_{P,\sigma}\subset\mathcal{X}$, it further follows that $\mathcal{E}_{P,\sigma}$ is an RSIS under a controller $\pi(x)=\pi_K(x)+\pi_\rho(x)$, uniformly for any $\pi_\rho\in\Pi_L$.
\end{proof}
\end{document}